\newtheorem{observation}{\textbf{Observation}}
\newtheorem{theorem}{Theorem}[section]
\newtheorem{lemma}[theorem]{Lemma}
\DeclareMathOperator*{\argmax}{arg\,max}
\definecolor{lightest_pink}{rgb}{1.0, 0.6, 1.0}
\definecolor{lighter_pink}{rgb}{1.0, 0.3, 1.0}
\definecolor{pink}{rgb}{1.0, 0.0, 1.0}
\journal{Artificial Intelligence Journal}
\begin{document}

\begin{frontmatter}



\title{MAIRE - A Model-Agnostic Interpretable Rule Extraction Procedure for Explaining Classifiers}



\author{Rajat Sharma} 
\ead{2015csb1026@iitrpr.ac.in}

\author{Nikhil Reddy }
\ead{2018csm1011@iitrpr.ac.in}

\author{Vidhya Kamakshi }
\ead{2017csz0005@iitrpr.ac.in}

\author{Narayanan C Krishnan \corref{corresponding}} 
\ead{ckn@iitrpr.ac.in}
\cortext[corresponding]{Corresponding Author}

\author{Shweta Jain} 
\ead{shwetajain@iitrpr.ac.in}

\address{Department of Computer Science and Engineering, 
Indian Institute of Technology Ropar,
Rupnagar,
Punjab - 140001,
India
}

\begin{abstract}
The paper introduces a novel framework for extracting model-agnostic human interpretable rules to explain a classifier's output. The human interpretable rule is defined as an axis-aligned hyper-cuboid containing the instance for which the classification decision has to be explained. The proposed procedure finds the largest (high \textit{coverage}) axis-aligned hyper-cuboid such that a high percentage of the instances in the hyper-cuboid have the same class label as the instance being explained (high \textit{precision}). Novel approximations to the coverage and precision measures in terms of the parameters of the hyper-cuboid are defined. They are maximized using gradient-based optimizers. The quality of the approximations is rigorously analyzed theoretically and experimentally. Heuristics for simplifying the generated explanations for achieving better interpretability and a greedy selection algorithm that combines the local explanations for creating global explanations for the model covering a large part of the instance space are also proposed. The framework is model agnostic, can be applied to any arbitrary classifier, and all types of attributes (including continuous, ordered, and unordered discrete). The wide-scale applicability of the framework is validated on a variety of synthetic and real-world datasets from different domains (tabular, text, and image).
\end{abstract}



\begin{keyword}
Interpretable Machine Learning \sep Explainable Models \sep Rule Based Explanations 


\end{keyword}

\end{frontmatter}



\section{Introduction}

The working of classic machine learning models such as simple decision trees, linear regression can be easily interpreted by analyzing the parameters of the model. But, for want of higher accuracy or better generalization performance, complex classifiers such as deep neural networks, support vector machines, and decision forests are being employed. However, improved performance comes at the cost of reduced human interpretability. Recent research focuses on explaining the working of these complex black-box models, thereby bridging the accuracy-interpretability trade-off and making them useful and trustworthy for a wider community.

An explainable approach that can work irrespective of the underlying black-box model is desirable. Such approaches are referred to as model agnostic approaches in the literature \cite{Ribeiro:2016:WIT:2939672.2939778,AAAI1816982,lore,muse}. Short and concise rules are highly human interpretable \cite{hima_lakkaraju_kdd_2016_ids,lore}. Hence we would like to develop a model-agnostic explainable approach that is capable of providing the human interpretable rules for any black-box machine learning model. A major challenge in designing such approaches lies in preserving \textit{faithfulness} to the black-box. An explanation method is said to be faithful to a black-box model if it identifies features that are truly important for the working of the model. 


A popular method of explaining the black-box model's working is by assigning ranks to the features relative to the importance the black-box model gives to a feature. This feature rank is easy to understand but is not complete. The ranking approach does not capture the class discriminative information based on the range of values. In other words, if a specific range of values for a feature results in classification to a class and outside the range corresponds to a different class, such a mechanism would not be revealed by feature ranking approaches \cite{chen2018learning,lrp}. For an explanation based on feature ranking approach to be complete, we need a measure to say how relevant is the feature value to a particular prediction. The sensitivity of the changes to the output of the model due to small changes in the feature values must also be captured. In a nutshell, feature ranking by itself is an incomplete explanation. Various factors like the importance of a feature, tolerable range of values to get the same prediction, the influence of a feature value in driving towards a prediction, is to be additionally considered along with the feature rank to provide a complete explanation to the working of the black-box model.


Another class of methods, called rule-based methods, provide intuitive explanations. Decision trees, decision lists that provide rules in the form of if-then-else statements in a hierarchical fashion, come under this category. These are global explanation models that aim to explain the working of the model in the whole instance space. Though the explanations are intuitive, it is not always simple to comprehend. If the hierarchical structure of if-then-else statements grows into longer chains, it is difficult to comprehend, and the interpretability suffers \cite{hima_lakkaraju_kdd_2016_ids}. It is to be noted that these methods partition the instance space based on the attribute values. Longer chains of if-then-else statements would mean small partitions created in the instance space. This further complicates the scenario as the rules are less generalizable.


Anchors \cite{AAAI1816982}, a recent approach overcomes the limitations of feature ranking and rule-based approaches. It builds on the observation that a complex tree of rules encompasses many simple trees. Hence instead of attempting to build a tree that spans the entire instance space and provides an explanation of the black-box model globally, it is better to provide a local explanation spanning a smaller partition of the whole instance space. The precision and coverage metrics defined in \cite{AAAI1816982} help to preserve the desired properties of posthoc explanations. But a limitation of the Anchors approach is that it is applicable only for discrete attributed datasets. In the case of continuous-valued attributes, Anchors can be applied after the continuous values are mapped onto a discrete values set only. 

Binning is employed to discretize the continuous-valued attributes by identifying a threshold to create bins. The binning threshold plays a crucial role and it may not be always possible to obtain the tighter bounds on the range of tolerable values for a prediction. Thus, an unsuitable binning threshold may lead to loss of subtle class discriminant information that may otherwise be present in the original continuous-valued attributes.

The proposed framework MAIRE is a non-trivial extension of Anchors that is applicable across any attribute type - continuous, discrete (ordered or unordered). A sample explanation generated from our approach is shown in the table ~\ref{tab:my-table}.

\begin{table}[]
\centering
\resizebox{\columnwidth}{!}{%
\begin{tabular}{cclcc}
\hline
 & \textbf{If} & \textbf{Predict} & \textbf{Coverage} & \textbf{Precision} \\ \hline
 \rotatebox[origin=c]{90}{\textbf{\,\,\,\,adult\,\,\,\,\,}} &
      \begin{tabular}[c]{@{}l@{}} 17\,\, \,$<$ \,Age\,\,\,\, \,\, \,\, \,\, \,\, \,\, $\leq$\, 43\\\,\, \,\, \,\,\,\, \,\, \,Education \,\, \,\, \,=\, High School grad\\ 0.00 \textless\,Capital-Loss \,$\leq$\,1291.44\end{tabular}
 &
  $\leq$ 50K &
  0.35 &
  0.95 \\ \hline
\rotatebox[origin=c]{90}{\textbf{\,\,\,\,abalone\,\,\,\,}} &
  \begin{tabular}[c]{@{}l@{}}\hspace*{-.5cm}\,Sex\,\,\,\,\,\,\,\,\,\,\,\,\,\,\,\,\,\,\,\,\, =\, F\\\hspace*{-1.5cm}0.07 $<$ Length \,\,\,\,\,\,\,\,\,\,\,$\leq$\,\,0.48\\ \hspace*{-1.5cm}0.05 \textless \,Diameter\,\,\,\,\,\,\, $\leq$\,0.37\end{tabular} &
  $\leq$9 Rings &
  0.25 &
  0.94 \\ \hline
\rotatebox[origin=c]{90}{\textbf{\,\,\,\,german credit\,\,\,\,}} &
  \begin{tabular}[c]{@{}l@{}}\hspace*{-3cm}\,\, \,\,\,\,\,\,\,\,\,\,\,\,\,\,\,\,\,\,\,\,\,\,\,\,\,\,\,\,\,\,\,\,\,\,\,\,\,\,\,Housing \,\,\,\,\,\,\,\,\,\,\,=\,\, own\\ \hspace*{-2.2cm}\,\,\,\,\,\,\,\,\,\,\,\,20\,\, $<$\, Duration\,\,\,\,\,\,\,\,\, $\leq$\,\,25\\\hspace*{-2cm}\,\,\,\,\,\,\,\,38 \,\,\,\textless{}\, Age \,\,\,\,\,\,\,\,\,\,\,\,\,\,\,\,\,\,\,\,$\leq$ 54\end{tabular} &
  \,\,\,good &
  0.28 &
  0.94 \\ \hline
\end{tabular}
}
\caption{Example MAIRE explanations obtained for the Adult, Abalone and German credit datasets}
\label{tab:my-table}
\end{table}

\section{Related Work}
There are significant efforts in different directions towards improving the explainability aspect of machine learning models. We broadly categorize the approaches into three.

\textbf{Model agnostic methods} are like `meta-learning' approaches that are capable of explaining the behavior of any black-box classifier. LIME\cite{Ribeiro:2016:WIT:2939672.2939778} approximates the working of the black box classifier in a local neighborhood by fitting a linear model on the black box predictions for the neighbors.  Anchors\cite{AAAI1816982} finds the decision rule for black-box model prediction such that the rule anchors the prediction adequately as governed by the precision and coverage metrics. Both LIME and Anchor generate global explanations but apply only to discrete-valued datasets. MAIRE overcomes this constraint by its novel optimization framework. \\
Learning To Explain (L2X) \cite{chen2018learning} does instance wise feature selection by maximizing the mutual information between the subset of features and the target variable. SHAP \cite{lundberg2017unified} uses Shapley values to predict the importance of features towards a prediction. Both L2X and SHAP use feature ranking approach, which is accurate in text classification. However, feature ranking may not always be optimal as feature values may play an important role in distinguishing between two classes. MAIRE, on the other hand, explains in terms of a range of values of an attribute. LORE \cite{lore} explains the black-box model by extracting rules using a decision tree applicable in a local neighborhood generated by a genetic algorithm. This method is applicable for low dimensional datasets only as with high dimensional datasets; the decision tree may grow complex, reducing interpretability. LLORE \cite{llore} uses an autoencoder to perform dimensionality reduction so that LORE \cite{lore} can be applied in the reduced dimensional space. Dimensionality reduction may lead to loss of information and should be avoided.  Further, LLORE \cite{llore} can be used only for images and an extension to handle text data has only been mentioned as a future possibility. Our proposed approach MAIRE can be applied to different domains (text, tabular, image) and does not require any modification to the dimensionality of the feature space, thus preserving all the information.

The proposed work is close to that of Lakkaraju et al \cite{muse} in the broader sense from the perspective of explanation generation in terms of rules as per attribute ranges. But their explanation generation algorithm requires value ranges to be provided, and explanation is in terms of the rules explaining how the black-box model works in the subspace defined by the given attribute values. This flexibility may be beneficial for the tabular datasets, where the value range for attributes shall make sense to end-users. While the approach in \cite{muse} is model agnostic like MAIRE, the extensive experimentation has been carried out only on tabular datasets. This need to give attribute value ranges for explanation generation is challenging in case of images or textual datasets where the attributes the black-box model works on may be different from how humans perceive the data. Our proposed approach MAIRE does not have this requirement and hence is readily applied to explain black-box models working on data from different domains.

\textbf{Model specific explainable methods} are designed to explain the working of a single or a class of models. Approaches like Guided Back Propagation \cite{guided_back_prop}, CAM \cite{cam}, and its extensions \cite{grad_cam, grad_cam_plus_plus,score_cam,ablation_cam} are applicable to architectures involving Convolutional Neural Networks only. Specifically for deep learning architectures, an attribution based technique, DeepLIFT \cite{deep_lift} provides a set of rules to assign contribution scores to every unit of a deep neural network. In contrast, the MAIRE framework explains the output of any black-box model.

\textbf{Models explainable by design} consist of methods that propose new explainable classifiers that are trained from scratch. Interpretable CNN \cite{interpretable_cnn} uses mutual information to learn interpretable parts that are filtered through predefined templates. A self-explaining architecture involving an autoencoder that determines representative prototypes clustered around inputs in a latent space was proposed by Li et al \cite{aaai_2018_prototype_vector}. In another approach, the convolutional layer feature maps are used as latent representations that helps to localize regions of the image that are similar to the prototypes \cite{this_looks_like_that,hierarchical_prototypes}. Models that explain the output in terms of human interpretable rules have also been proposed \cite{hima_lakkaraju_kdd_2016_ids,rudin_rule_ebd}. However, in contrast to the MAIRE framework, these models cannot be applied to an already deployed model.

\section{Methodology}
\subsection{Problem Statement}
Let $\{\textbf{x}_n,y_n\}_{n=1}^N$ be a set of $N$ training examples, where $\textbf{x}_n\in \mathbb{R}^D$ is a data point and $y_n \in \mathcal{Y}$ is the associated label. For simplicity, let us assume that all the attributes are continuous and are normalized to the range $[0, 1]$ and that the classification task is binary. The MAIRE framework can be easily extended for discrete attributes and multi-class classification. Given a query data point $\textbf{x}'_q$ and a classifier $f:\mathbb{R}^D\to \mathcal{Y}$, our objective is to explain the decision of the classifier at $\textbf{x}'_q$ i. e. $f(\textbf{x}'_q)$. Prior literature suggests that explanations in the form of rules defined on the values of the attributes are human interpretable \cite{AAAI1816982,hima_lakkaraju_kdd_2016_ids}. A simple way to define these rules for continuous attributes is in terms of range on the values. Thus, we define an explanation as \textbf{E} = \{\textbf{l}, \textbf{u}\}, where $\textbf{l, u} \in \mathbb{R}^D$ represent the lower and upper bounds of intervals such that $l_i\leq x'_{qi}\leq u_i,\ \forall i \in \{1,\ldots,d\}$. 

The Cartesian product of these intervals represents a hyper-cuboid denoted by $S(\textbf{l},\textbf{u})$. This is illustrated as a rectangle for the 2D dataset presented in Figure \ref{fig:illustration}. Our objective is to find an explanation that has high coverage and satisfying a certain threshold on precision. Coverage of an explanation $\textbf{E}$, $Cov(\textbf{l}, \textbf{u})$, is defined as the fraction of data points that lie within the hyper-cuboid,
\begin{equation}
    Cov(\textbf{l}, \textbf{u}) =  \frac{1}{N}\sum_{i = 1}^N \bm{1}(\textbf{x}_i \in S(\textbf{l},\textbf{u}))
    \label{eq:cov}
\end{equation}
where $\bm{1}(A)$ is the indicator function that takes the value 1 if the argument $A$ is true. High coverage means that more data points are explained using the hyper-cuboid.

Precision, $Pre(\textbf{l}, \textbf{u})$, is defined as the fraction of training instances that lie within the hyper-cuboid representing the explanation $\textbf{E}$ and whose classifier predictions match with the classifier prediction of the query point,
\begin{equation}
    Pre(\textbf{l}, \textbf{u})=\frac{\sum_{i=1}^N\bm{1} (f(\textbf{x}_i) = f(\textbf{x}'_q) \text{ and } \textbf{x}_i \in S(\textbf{l},\textbf{u}))}{\sum_{i=1}^N\bm{1}(\textbf{x}_i \in S(\textbf{l},\textbf{u}))}
    \label{eq:pre}
\end{equation}
The MAIRE framework allows for a user to define a minimum value $P$ for the precision of an explanation $Pre(\textbf{l}, \textbf{u})$. Thus the overall objective of the framework is to find an explanation (or the hyper-cuboid) that maximizes the coverage, while ensuring that the precision of the estimated explanation does not fall below the threshold $P$ i.e., 
\begin{equation}
 \underset{\{\textbf{l}, \textbf{u}\}}{\text{argmax}} \hspace{0.3cm} Cov(\textbf{l}, \textbf{u})  \hspace{0.5cm}\text{s.t. } \hspace{0.3cm} Pre(\textbf{l}, \textbf{u}) \geq P.
\label{eq:combopt}
\end{equation}
The above optimization problem is challenging to solve due to the involvement of the indicator function. For a binary classification setting, with $P = 1$, this problem becomes the bichromatic rectangle problem, a widely studied combinatorial problem in computational geometry. Bichromatic rectangle problem involves computing a rectangle containing maximum number of red points and no blue points amongst the set containing $n$ red points and $m$ blue points in $d-$dimensional space. Most of the results in this area hold for $2D$ \cite{acharyya2019,armaselu2017}. Further, the problem is NP-hard for arbitrary dimension \cite{eckstein2002}. Solving the above problem even approximately is therefore important for many applications.

We propose a novel method to transform the coverage and precision functions into differentiable approximations (with non-zero gradients), thereby making it easier to optimize using gradient-based methods.

\begin{figure}
\centering
\subfigure[Explanation in 2D]{
\includegraphics[width=0.4\textwidth]{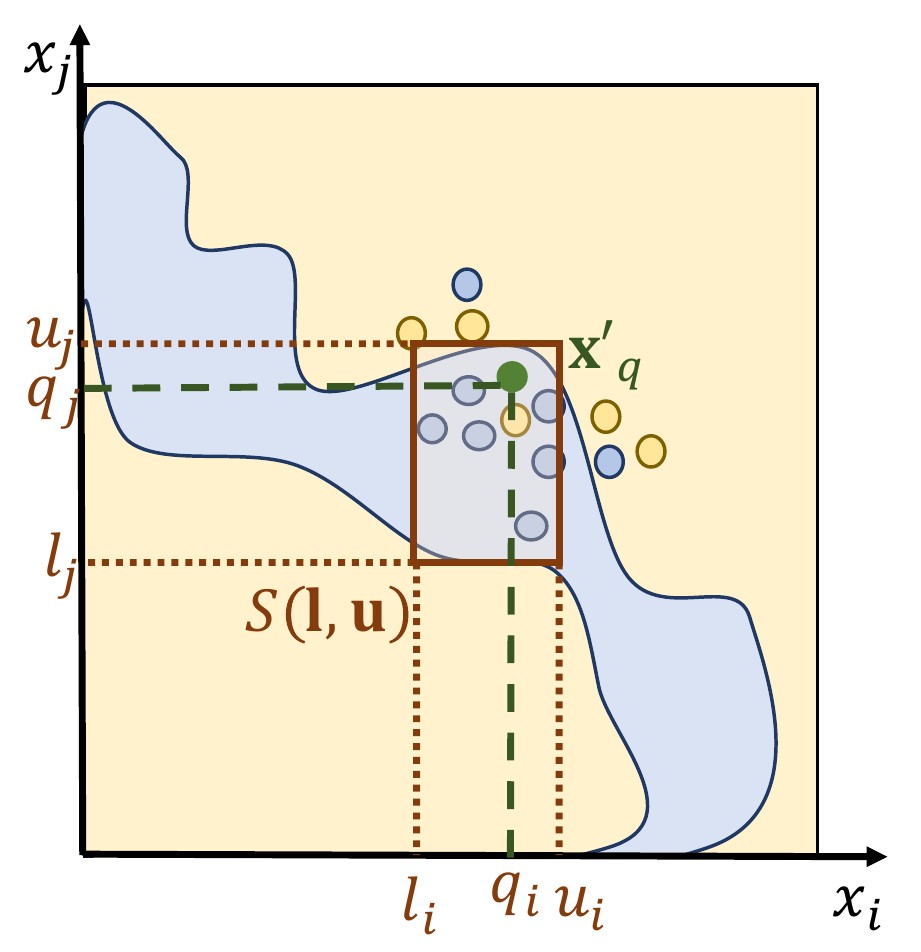}
\label{fig:illustration}}\hspace{0.02\textwidth}
\subfigure[Indicator function $\bm{1}(x)$ and its approximation $\Gamma(x)$]{
\includegraphics[width=0.4\textwidth]{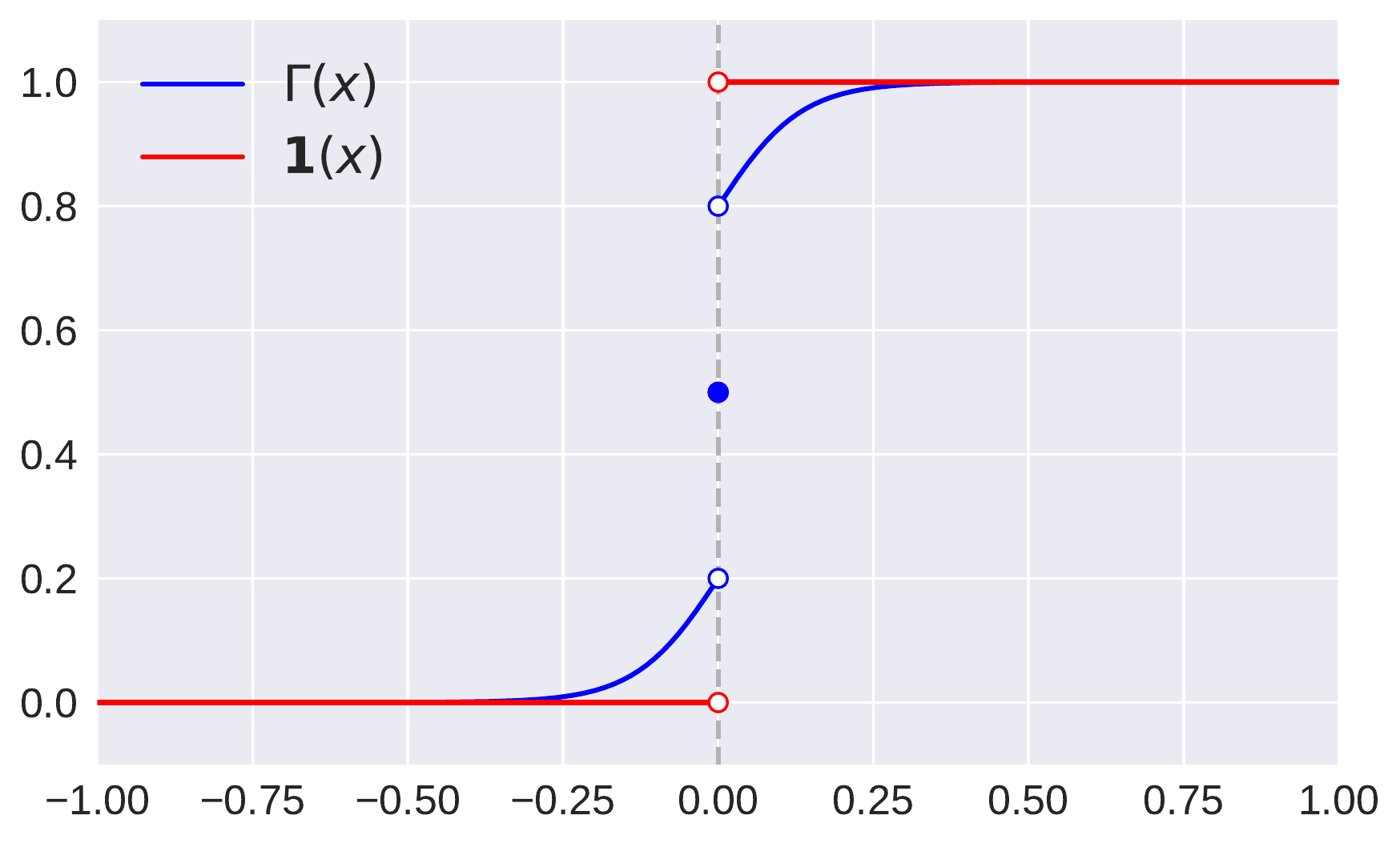}
\label{fig:gamma}}
\caption{[Best viewed in color] Illustration of the explanation and the approximation to the indicator function}
\end{figure}
\subsection{Approximations to Coverage and Precision}
We first define the function $\Gamma$, which is an approximation to the indicator function, as
$\Gamma(z) = c_1\sigma(c_2 z)+c_3(\text{sgn}(z)c_4+c_5$
where $c_1, c_2, c_3, c_4$, and $c_5$ are constants that determine the quality of the approximation, $\sigma$ is the Sigmoid function, and $\text{sgn}(z)$ is the Signum function. The constant $c_1$ is chosen to scale down the sigmoid function so that, $c_1 \sigma(c_2 z)$ takes values in a small range (effectively modeling the horizontal lines of the indicator function, but still retaining non-zero gradients). The constant $c_2$ has a high value to model the steep increase at $z = 0$ while making $\sigma(c_2 z)$ flatter for $z < 0$ and $z > 0$. The constants
$c_4$ and $c_5$ are chosen so that $(sgn(z)c_4 + c_5)$  is 1 when $z > 0$, 0.5 when $x=0$ and 0 otherwise.  $c_3$ is chosen to provide a step at $z = 0$ such that $\Gamma(z) \in (0, 1)$. This makes $\Gamma(z)$ piece-wise differentiable with non-zero gradients. The behavior of $\Gamma(z)$ is illustrated in Figure \ref{fig:gamma}. Note that, $\Gamma(z)=c_1\sigma(c_2 z)$ if $z<0$, $\Gamma(z)=c_1\sigma(c_2 z)+c_3$ if $z>0$, and $\Gamma(z) = 0.5$ if $z=0$. 

We can now approximate $\bm{1}(x_1>x_2)$ as $G(x_1, x_2)  = \Gamma(x_1-x_2)$ and $\bm{1}(x_1\geq x_2)$ as $GE(x_1,x_2) = \Gamma(x_1-x_2+c_l)$, where $c_l$ is a constant that has a low value (close to 0). The approximation to the function $\bm{1}(x_1 \text{ and } x_2 \ldots \text{ and } x_m)$ for the logical operator `and' is defined as $A(x_1,x_2,\ldots,x_m)=\Gamma(\frac{1}{m}\sum_{i=1}^m x_i -c_h)$, where $c_h$ is a constant that has a high value (close to 1). 

Let us define the functions $a_{2 j - 1}(\textbf{l}, \textbf{u}, \textbf{x}) = G(x_j, l_j) \text{ and }
a_{2 j}(\textbf{l}, \textbf{u}, \textbf{x}) = GE(u_j, x_j)$ with $j\in\{1,\ldots,D\}$. Then the approximation to the indicator function $\bm{1}(\textbf{x} \in S(\textbf{l},\textbf{u}))$  can be defined as
$h(\textbf{l}, \textbf{u}, \textbf{x}) = A(a_1(\textbf{l}, \textbf{u}, \textbf{x}), a_2(\textbf{l}, \textbf{u}, \textbf{x}),\ldots,  a_{2D}(\textbf{l}, \textbf{u}, \textbf{x}))$
Note that $h(\textbf{l}, \textbf{u}, \textbf{x})$ should take a value close to 1 if the point $\textbf{x}$ lies inside the hyper-cuboid $S(\textbf{l}, \textbf{u})$, else should take a value close to 0. We can now define approximate coverage and approximate precision as:
\begin{align*}
\hat{Cov}(\textbf{l}, \textbf{u}) = \frac{1}{N}\sum_{i=1}^N h(\textbf{l}, \textbf{u}, \textbf{x}_i)
\end{align*}
\begin{align*}
    \hat{Pre}(\textbf{l}, \textbf{u}) = \frac{\sum_{i = 1}^N h(\textbf{l}, \textbf{u}, \textbf{x}_i) (1 - (f(\textbf{x}_i) - f(\textbf{x}'_q))^2)}{\sum_{i = 1}^N h(\textbf{l}, \textbf{u}, \textbf{x}_i)}
\end{align*}

\subsection{Accuracy of the Approximation}
In this section, we theoretically bound the accuracy of our approximation functions $\hat{Cov}$ and $\hat{Pre}$. The accuracy of the approximation depends on the values of the constants in the definition of $\Gamma$. Note that, by definition $c_4=c_5=0.5$ and $c_3=1-c_1$. Thus we need to tune the parameters $c_1$, $c_h$, $c_l$, and $c_2$. Before we bound $\hat{Cov}$ and $\hat{Pre}$, we would like to make the following observation for the function $\Gamma(x)$ which is defined as $\Gamma(x) = c_1\sigma(c_2x) + c_3(sgn(x)c_4+c_5)$.
\begin{observation}
\label{obs1}
When $c_4=c_5=0.5$ and $c_3 = 1-c_1$, we have:
\begin{itemize}
    \item If $x>0$, $\Gamma(x) = c_1\sigma(c_2x) + c_3$
    \item If $x < 0$, $\Gamma(x) = c_1\sigma(c_2x)$
    \item If $x = 0$, $\Gamma(x) = 0.5$ 
\end{itemize}
\end{observation}

We first begin with bounding the term $h(l,u,x)$. The following Lemma shows $h(l,u,x)$ is a good enough approximation for the indicator function for any poing $x \in \mathbb{R}^d$. 
\begin{lemma}
\label{main_lemma}
Let $c = \frac{c_1}{2}$ and $c_h > 1-c$. If $c < \frac{1}{4D}$, we have $\forall x_i$:
\begin{itemize}
    \item If $ l_j < x_{ij} \le u_j\ \forall j = \{1,2,\ldots, D\}$, we have:
    \begin{align*}
        h(l,u,x) &\le 1 \text{ and }\\
        h(l,u,x) &\ge 1-c
    \end{align*}
    i.e. for all points lying inside the hypercuboid, function $h(\cdot)$ is very close to 1.
    \item If $\exists k,m$ with $k+m \ge 1$, such that $x_{ij} \le l_j$ for $k$ attributes or $x_{ij} > u_j$ for $m$ attributes then:
    \begin{align*}
        h(l,u,x) &\le c \text{ and }\\
        h(l,u,x) &\ge 0
    \end{align*}
    i.e. for all points lying outside the hypercuboid, function $h(\cdot)$ is very close to 0.
\end{itemize}
\end{lemma}
\begin{proof}
The proof considers four cases depending on the number of attributes of a data point that lie between the lower bound and upper bound of the hyperrectangle.\\
\textbf{case 1:} $\forall j\in \{1,2,\ldots, D\}, l_j < x_{ij} \le u_j$. 
\begin{align*}
\allowdisplaybreaks
h(l, u, x_i) &= \Gamma\left(\frac{\sum_{j=1}^{D} \Gamma\left(x_{ij}-l_{j}\right)+\sum_{j=1}^{D} \Gamma\left(u_{j}-x_{ij}+c_{l}\right)}{2 D}-c_{h}\right)\\
&=\Gamma\left( \frac{\sum_{j=1}^{D} (c_1\sigma\left(c_{2}\left(x_{ij}-l_{j}\right)\right)+c_3)}{2D} +\frac{\sum_{j=1}^{D} (c_1\sigma\left(c_{2}\left(u_{j}-x_{ij }+c_l\right)\right)+c_3)}{2 D}-c_{h}\right)\tag*{(From Observation \ref{obs1})}
\end{align*}
Let, $t = c_1\frac{\sum_{j=1}^{D} \sigma\left(c_{2}\left(x_{ij}-l_{j}\right)\right)+ \sum_{j=1}^{D}\sigma\left(c_{2}\left(u_{j}-x_{ij }+c_l\right)\right)}{2D}+c_3-c_h$, then using the fact that $\sigma(x) \ge 0.5 \text{ if } x>0$, we have:
\begin{align*}
    t &\ge \frac{c_1}{2}+c_3 - c_h\\
    &\ge 1-\frac{c_1}{2} - c_h\\
    &> 0 \tag*{(if $c_h + \frac{c_1}{2} < 1$)}
\end{align*}
Thus, if $c_h + \frac{c_1}{2} < 1$, we have $t > 0$. Thus, we get,
$h(l,u,x_i) = \Gamma(t) = c_1\sigma(c_2t) + c_3$ from Observation \ref{obs1}. Since, $t > 0$, $c_2t > 0$ for any $c_2 > 0$, we have,
$h(l,u,x_i) \ge \frac{c_1}{2} + c_3 \ge 1-\frac{c_1}{2}$. Also, $h(l,u,x_i) = c_1\sigma(c_2t) + c_3 \le c_1 + c_3 \le 1$\\

\textbf{Case 2:} Let us assume that $\exists k$ such that $x_{ij} \le l_j$ for $k$ attributes i.e. point lie outside or on the lower bound of hypercuboid for $k \ge 1$ attributes and $\exists m$ such that $x_{ij} > u_j$ for $m \ge 1$ attributes. Out of $k$ attributes, let $k_1$ attributes have $x_{ij} = l_j$ and $k-k_1$ attributes $x_{ij} < l_j$. Then, we have:
\begin{itemize}
    \item For all $k_1$ attributes: $\Gamma(x_{ij} - l_j) = 0.5$ 
    \item For $k-k_1$ attributes:$\Gamma(x_{ij} - l_j) = c_1\sigma(c_2(x_{ij}-l_j)) \le c_1$
    \item For $D-k$ attributes: $\Gamma(x_{ij} - l_j) = c_1\sigma(c_2(x_{ij}-l_j)) + c_3 \le 1$
    \item For all $m$ attributes: $\Gamma(u_j - x_{ij} + c_l) = c_1\sigma(c_2(u_j-x_{ij} + c_l)) \le c_1$
     \item For $D-m$ attributes: $\Gamma(u_j - x_{ij} + c_l) = c_1\sigma(c_2(u_j-x_{ij} + c_l)) + c_3 \le 1$
\end{itemize}
We get,
\begin{align*}
    &h(l,u,x_i)\\ 
    &= \Gamma\left(\frac{0.5k_1+\sum_{j=1}^{k-k_1}c_1\sigma(c_2(x_{ij}-l_j))}{2D} + \frac{\sum_{j=k+1}^D((c_1\sigma(c_2(x_{ij}-l_j)) + c_3)}{2D}\right.\\
    &\left. + \frac{\sum_{j=1}^m c_1\sigma(c_2(u_j-x_{ij} + c_l)}{2D} + \frac{\sum_{j=m+1}^D c_1\sigma(c_2(u_j-x_{ij} + c_l) + c_3}{2D} - c_h\right)
\end{align*}
Let, $h(l,u,x_i) = \Gamma(t)$ i.e. consider the entire term in $\Gamma$ expression to be $t$ then:
\begin{align*}
    t \le &\frac{0.5k_1 + (k-k_1)0.5c_1 + (D-k)(c_1 + c_3)}{2D} +\frac{0.5mc_1 + (D-m)(c_1+c_3)}{2D} - c_h\\
    t \le &\frac{0.5k_1(1-c_1) + 0.5kc_1 + 0.5mc_1+2D-k-m}{2D}-c_h \tag*{($1 \le k+m \le 2D$, $k_1 \le D$, and $0 < c_1 < 1$)}\\
    \le & \frac{0.5D(1-c_1) + 0.5c_1D}{2D}+\frac{2D-1}{2D}-c_h\\
    \le&\frac{1}{4D} + \frac{2D-1}{2D} - c_h \le \frac{4D-1}{4D} - c_h
\end{align*}
Thus, when $c_h > \frac{4D-1}{4D}$, then we get $t < 0$. In this case, we have: $h(l,u,x_i) = \Gamma(t) = c_1\sigma(c_2t) \le \frac{c_1}{2}$. From Case 1, we have $\frac{c_1}{2} < 1-c_h$. Substituting $c_h > \frac{4D-1}{4D}$, we get, $\frac{c_1}{2} < \frac{1}{4D}$. Thus, if any of the attribute of the example lies outside the boundary, we get $h(l,u,x_i) \le \frac{1}{4D}$ and if all the attributes lie inside the boundary, we get $h(l,u,x_i) \ge \frac{4D-1}{4D}$
\end{proof}

Thus, choosing $c_1$ and $c_h$ according to the lemma ensures that $h$ is a good approximation to the indicator function $\bm{1}(\textbf{x} \in S(\textbf{l},\textbf{u}))$. Further, we can arrive at the following important result that bounds the difference between $Cov$ and the corresponding approximation $\hat{Cov}$.
 \begin{theorem}
 If $c_1<\frac{1}{2D}$ and $c_h>\frac{4D-1}{4D}$, then
 \begin{equation}\left(\frac{4D-1}{4D}\right) Cov \le \hat{Cov} \le \frac{1}{4D} + \left(\frac{4D-1}{4D}\right)Cov \nonumber
 \end{equation}
\end{theorem}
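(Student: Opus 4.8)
The plan is to reduce the statement directly to Lemma~\ref{main_lemma} by splitting the training set according to membership in the hyper-cuboid $S(\textbf{l},\textbf{u})$. Write $c = c_1/2$, so that the hypothesis $c_1 < \frac{1}{2D}$ is exactly $c < \frac{1}{4D}$; together with $c_h > \frac{4D-1}{4D}$ this should put us in the regime where Lemma~\ref{main_lemma} applies, and I would begin by checking that these constraints on $c_1$ and $c_h$ indeed imply the hypotheses needed by the lemma. Let $I = \{\, i : \textbf{x}_i \in S(\textbf{l},\textbf{u})\,\}$ and $O = \{1,\dots,N\}\setminus I$, so that $|I| = N\cdot Cov$ and $|O| = N(1-Cov)$. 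By the definition of $\hat{Cov}$ we have $\hat{Cov} = \frac{1}{N}\bigl(\sum_{i\in I} h(\textbf{l},\textbf{u},\textbf{x}_i) + \sum_{i\in O} h(\textbf{l},\textbf{u},\textbf{x}_i)\bigr)$, and Lemma~\ref{main_lemma} gives $1-c \le h(\textbf{l},\textbf{u},\textbf{x}_i) \le 1$ for $i\in I$ and $0 \le h(\textbf{l},\textbf{u},\textbf{x}_i) \le c$ for $i\in O$.

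For the lower bound I would discard the non-negative $O$-contribution and keep only the $I$-contribution: $\hat{Cov} \ge \frac{1}{N}\,|I|\,(1-c) = (1-c)\,Cov \ge \frac{4D-1}{4D}\,Cov$, where the last inequality uses $c < \frac{1}{4D}$ and $Cov \ge 0$. For the upper bound I would bound each group by its per-point maximum: $\hat{Cov} \le \frac{1}{N}\bigl(|I|\cdot 1 + |O|\cdot c\bigr) = Cov + c\,(1-Cov)$. The remaining step is purely algebraic: rewrite this last expression as $c\,(1-Cov) + Cov$ and the target bound as $\frac{1}{4D}(1-Cov) + Cov$ (using $\frac{4D-1}{4D} = 1 - \frac{1}{4D}$); since $1-Cov \ge 0$ and $c < \frac{1}{4D}$, the former is at most the latter, which is exactly $\hat{Cov} \le \frac{1}{4D} + \frac{4D-1}{4D}\,Cov$.

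There is essentially no hard step here once Lemma~\ref{main_lemma} is available; the two points to be careful about are (i) confirming that the theorem's hypotheses on $c_1$ and $c_h$ really do license the use of the lemma, and (ii) the bookkeeping in the upper bound, where it is cleanest to put both sides in the common form $\alpha(1-Cov) + Cov$ and compare the coefficients $\alpha = c$ versus $\alpha = \frac{1}{4D}$ rather than manipulating the inequalities head-on. If desired, the same argument yields the slightly sharper two-sided estimate $(1-c)\,Cov \le \hat{Cov} \le c + (1-c)\,Cov$, from which the stated bounds follow by relaxing $c$ up to $\frac{1}{4D}$.
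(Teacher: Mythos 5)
Your proposal is correct and follows essentially the same route as the paper's own proof: split the sum defining $\hat{Cov}$ over points inside and outside $S(\textbf{l},\textbf{u})$, apply Lemma~\ref{main_lemma} to each group, and relax $c$ to $\frac{1}{4D}$. The intermediate estimate $(1-c)\,Cov \le \hat{Cov} \le c + (1-c)\,Cov$ that you note is exactly the form appearing in the paper's derivation before the final relaxation.
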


\begin{proof}
Let the actual coverage from the hypercuboid $(l,u)$ be $\frac{k}{N}$ i.e. $\sum_{i=1}^N\mathbb{I}(x_i \in S(l,u)) = k$. Then:
\begin{align*}
\allowdisplaybreaks
    \hat{Cov} &= \frac{1}{N}\sum_{i=1}^N h(l,u,x_i)\\
    &=\frac{1}{N}\sum_{x_i \in S(l,u)} h(l,u,x_i)+\frac{1}{N}\sum_{x_i \notin S(l,u)} h(l,u,x_i)\\
    &\ge \frac{1}{N}k(1-c)\tag*{(From Lemma \ref{main_lemma})}\\
    &\ge Cov\left(\frac{4D-1}{4D}\right) \tag*{($c <\frac{1}{4D}$)}
\end{align*}

Also,
\begin{align*}
\allowdisplaybreaks
    \hat{Cov} &=\frac{1}{N}\sum_{x_i \in S(l,u)} h(l,u,x_i)+\frac{1}{N}\sum_{x_i \notin S(l,u)} h(l,u,x_i)\\
    &\le \frac{k}{N} + \frac{N-k}{N}c\tag*{From Lemma \ref{main_lemma}}\\
    &\le c + Cov\left(1-c\right)\\
    &\le \frac{1}{4D} + Cov\left(\frac{4D-1}{4D}\right)\tag*{($c <\frac{1}{4D}$)}
\end{align*}
\end{proof}
The above result is interesting not only because it bounds the approximate coverage in terms of true coverage but it also suggests that as the features (dimension) increases, approximate coverage becomes closer to the true coverage. We also verify this from our experiments in Table \ref{tab:approximation}.

We also have additional result for the bounds on the approximate precision.
\begin{theorem}
$\hat{Pre} \le Pre\left(1 + \frac{1}{Cov}\left(\frac{4D}{4D-1}\right)\right)$. Thus, when algorithm returns a hypercuboid with  $\hat{Pre} \ge P$ then $Pre \ge \frac{1}{\left(1 + \frac{1}{Cov}\left(\frac{4D}{4D-1}\right)\right)}P$
\end{theorem}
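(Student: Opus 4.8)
The plan is to mirror the proof of the coverage bound above, pushing everything back onto Lemma~\ref{main_lemma} and the already-established estimate $\hat{Cov}\ge\frac{4D-1}{4D}Cov$. First I would put $\hat{Pre}$ into a cleaner form. Since the setting is binary and $f$ is $\{0,1\}$-valued, $1-(f(\textbf{x}_i)-f(\textbf{x}'_q))^2$ equals the $0/1$ indicator $\bm{1}(f(\textbf{x}_i)=f(\textbf{x}'_q))$, so the numerator of $\hat{Pre}$ is $\sum_{i:\,f(\textbf{x}_i)=f(\textbf{x}'_q)}h(\textbf{l},\textbf{u},\textbf{x}_i)$ and the denominator is exactly $N\hat{Cov}$. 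Writing $Cov^+$ for the fraction of data points that both lie in $S(\textbf{l},\textbf{u})$ and have label matching $f(\textbf{x}'_q)$, we have $Cov^+=Pre\cdot Cov$, and $\hat{Pre}$ is the ratio of $\frac1N\sum_{i:\,f(\textbf{x}_i)=f(\textbf{x}'_q)}h$ to $\hat{Cov}$.

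Next I would bound the two pieces separately. The denominator is handled directly by the coverage theorem above: $N\hat{Cov}\ge\frac{4D-1}{4D}\,NCov$. For the numerator I would re-run the upper-bound half of that theorem's argument, but restricted to the sub-population of label-matching points: by Lemma~\ref{main_lemma} each such point inside $S(\textbf{l},\textbf{u})$ contributes at most $1$ and there are $NCov^+$ of them, while each such point outside contributes at most $c=c_1/2<\frac1{4D}$ and there are at most $N(1-Cov^+)$ of them (the number of matching points outside $S$ is at most $N$ minus the matching points inside $S$). Hence $\frac1N\sum_{i:\,f(\textbf{x}_i)=f(\textbf{x}'_q)}h\le Cov^++c(1-Cov^+)<\frac1{4D}+\frac{4D-1}{4D}Cov^+$.

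Dividing these two estimates gives
\[
\hat{Pre}\;\le\;\frac{\tfrac1{4D}+\tfrac{4D-1}{4D}Cov^+}{\tfrac{4D-1}{4D}Cov}\;=\;\frac{1+(4D-1)Cov^+}{(4D-1)Cov}\;=\;Pre+\frac{1}{(4D-1)Cov},
\]
where the last equality uses $Cov^+=Pre\cdot Cov$; this additive bound holds unconditionally. The final step, and the one I expect to be the only delicate point, is to convert it into the multiplicative form in the statement: one checks that $Pre+\frac{1}{(4D-1)Cov}\le Pre\bigl(1+\frac1{Cov}\cdot\frac{4D}{4D-1}\bigr)$ is, after clearing the common positive factor $\frac{1}{(4D-1)Cov}$, equivalent to $1\le 4D\cdot Pre$. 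So the multiplicative form is valid precisely when $Pre\ge\frac1{4D}$, which is the regime of interest (the procedure is only run with, and only returns, high-precision explanations, so $Pre$ is well away from $\frac1{4D}$ — trivially so in any moderate dimension); I would either record this as an explicit hypothesis or present the unconditional additive bound alongside it.

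Finally, the second assertion is an immediate rearrangement: if the returned hypercuboid has $\hat{Pre}\ge P$ then $P\le\hat{Pre}\le Pre\bigl(1+\frac1{Cov}\cdot\frac{4D}{4D-1}\bigr)$, hence $Pre\ge P\big/\bigl(1+\frac1{Cov}\cdot\frac{4D}{4D-1}\bigr)$. The numerator/denominator estimates are routine given Lemma~\ref{main_lemma} and the coverage theorem, so the real care is needed only in the additive-to-multiplicative conversion.
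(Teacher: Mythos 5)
Your proof is correct and shares the paper's overall skeleton --- bound the numerator and denominator of $\hat{Pre}$ separately via Lemma~\ref{main_lemma} --- but the two arguments diverge in how they control the label-matching points outside $S(\textbf{l},\textbf{u})$, and consequently in the side condition each needs to reach the stated multiplicative bound. The paper keeps the exact count: with $k$ points in the box, $q$ of them matching, and $m$ matching points overall, it bounds the numerator by $q+(m-q)c$ and the denominator below by $(1-c)k$, obtaining $\hat{Pre}\le Pre+\frac{m}{k}\cdot\frac{c}{1-c}$; it then converts $\frac{m}{k}$ into (a multiple of) $\frac{Pre}{Cov}$ by invoking $\frac{m}{N}\le\frac{q}{k}$, i.e.\ that the global base rate of the predicted class does not exceed the box's precision. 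That step appears only as a parenthetical justification but is really an unstated hypothesis (plausible for the high-precision boxes the algorithm returns, not true in general). You instead over-count the outside matching points by $N-q$, which yields the cleaner unconditional additive bound $\hat{Pre}\le Pre+\frac{1}{(4D-1)Cov}$, and you correctly identify that passing to the multiplicative form costs exactly the condition $Pre\ge\frac{1}{4D}$, which you flag explicitly. Neither route proves the inequality as literally stated without some such hypothesis --- it fails, for instance, when $q=0$ but $m>0$, since then the right-hand side vanishes while $\hat{Pre}>0$ because $h$ is strictly positive --- so your version is, if anything, the more careful one: its side condition is weaker in the regime of interest ($P$ is set near $0.95$ and $4D\ge 4$), and it records a valid unconditional additive bound along the way. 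The second assertion of the theorem follows by the same rearrangement in both proofs.
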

\begin{proof}
Let, $k$ points be inside the hyper-cuboid, out of $k$ points, $q$ points satisfy $f(x_i) = f(x_q)$ and $m$ points satisfy $f(x_i) = f(x_q)$ in total.
\begin{align*}
    \hat{Pre} &= \displaystyle \frac{\sum_{\substack{x_i = x_q\\ x_i \in S(l,u)}}h(l,u,x_i) + \sum_{\substack{x_i = x_q\\ x_i \notin S(l,u)}}h(l,u,x_i)}{\sum_{x_i \in S(l,u)}h(l,u,x_i) + \sum_{x_i \notin S(l,u)}h(l,u,x_i)}\\
    &\le \frac{q+(m-q)c}{(1-c)k}\\
    &\le Pre + \frac{m}{k}\left(\frac{c}{1-c}\right)\\
    &\le Pre + \frac{qN}{k^2}\left(\frac{4D}{4D-1}\right) \tag*{$\left(\frac{M}{N} \le \frac{q}{k} \text{ and }c < \frac{1}{4D}\right)$}\\
    &\le Pre + \frac{Pre}{Cov}\left(\frac{4D}{4D-1}\right)
\end{align*}
\end{proof}

In summary, when the dimension of the dataset increases, $c_1\approx 0$ and $c_h\approx 1$, and the difference between analytical coverage and the corresponding approximation tends to 0.
The accuracy of the approximation was further studied experimentally using a synthetic 1-D dataset. Figure \ref{fig:comp_plots} presents the plots of the analytical coverage, precision, and the corresponding estimates for a synthetic 1D dataset with [0.3, 0.7] representing the positive class. The query point is 0.5. It is evident from the figures the actual values and their approximations match quite well. Even though the theoretical bounds depend on the dimensionality of the data, we conclude from the experiments that the values $c_1=0.4, c_2=15, c_3=0.6, c_4=0.5, c_5=0.5, c_l=0.02$, and $c_h = 0.8$ work well for a wide variety of datasets and do not have to be tuned for a new dataset. We use these values for all the experiments performed in the paper.

\begin{figure}[t]
    \centering
    \includegraphics[width=\textwidth]{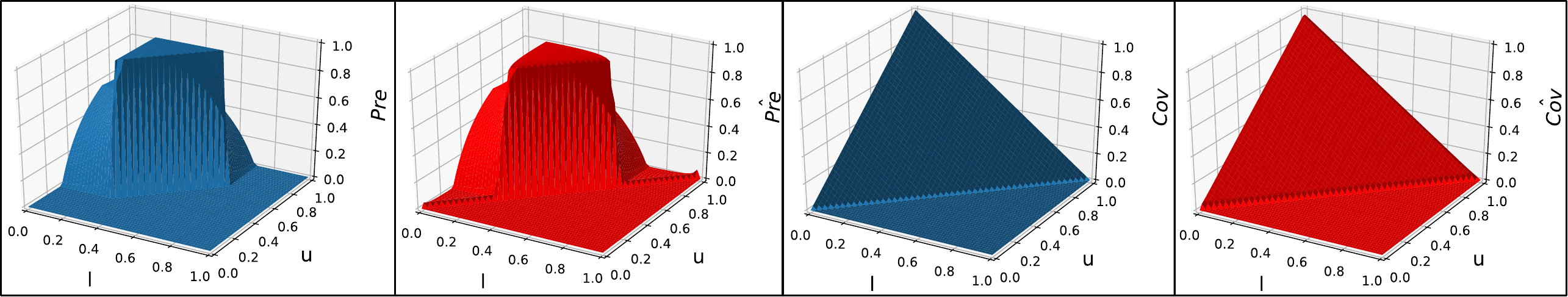}
    \caption{Comparison of $Cov$, $\hat{Cov}$, $Pre$, $\hat{Pre}$ for a 1D case for $\textbf{x}'_q = [0.5]$ where the interval [0.3, 0.7] has positive class label.}
    \label{fig:comp_plots}
\end{figure}

\section{Optimization to Estimate the Explanation}
Our next objective is to formulate the optimization problem in the MAIRE framework for estimating the explanation. In the simplest case, we want to find the optimal values for the parameters $\textbf{l}$ and $\textbf{u}$ that maximize coverage while maintaining a minimum precision $P$. The user sets this lower bound on precision. It is assumed that any value of $Pre(\textbf{l}, \textbf{u})$ above $P$ is acceptable. If the current value of precision is greater than the threshold, we would only like to maximize the coverage. If the analytical precision becomes less than the specified threshold, then in addition to maximizing coverage, the MAIRE framework also maximizes the precision $\hat{Pre}$. This component is weighted by a constant factor $\lambda_1$ to signify the importance of increasing the precision at the cost of reducing the coverage. If $\lambda_1$ is high, $\hat{Pre}$ will increase whenever the precision is less than the threshold. Thus the overall objective function $\mathcal{L}(\textbf{l}, \textbf{u})$ is defined as:

\begin{align*}
\allowdisplaybreaks
 \mathcal{L}(\textbf{l}, \textbf{u}) = \hat{Cov}(\textbf{l}, \textbf{u}) + \lambda_1 \hat{Pre}(\textbf{l}, \textbf{u})(1 + \text{sgn}(P - Pre(\textbf{l}, \textbf{u})))
\end{align*}
Note that the analytical precision value $Pre(\textbf{l} ,\textbf{u})$ is only required to activate the approximation term. 
The objective function $\mathcal{L}(\textbf{l}, \textbf{u})$ is maximized subject to two constraints. First, the lower and upper bound vectors $\textbf{l}$ and $\textbf{u}$ need to be in $[0, 1]^D$. As $\Gamma(z)$, the approximator to the indicator function, never truly achieves a zero gradient, if the explanation is unbounded, then the optimization procedure might never converge as the explanation could keep expanding in all directions indefinitely. This constraint is implemented by clipping the values of $\textbf{l}$ and $\textbf{u}$ at 0 and 1 respectively after every iteration. The second constraint is that the explanation finally generated must contain the query instance: $l_j \leq x'_{qj} \leq u_j, \forall j=1,\ldots, D$.  The optimizer focuses on these constraints, only when they are not satisfied. When the constraints are satisfied, the optimizer only maximizes the coverage. This is achieved by using the $\text{ReLU}$ function on the difference $l_j-x'_{qj}$ and $x'_{qj}-u_j$. These constraints are added to the final optimization function with a weighting constant $\lambda_2$ (can be viewed as the Lagrange multiplier used for constrained optimization) that signifies the penalty on the objective when the constraint is not satisfied. Thus, the final objective function in the MAIRE framework that is maximized with respect to the parameters $\textbf{l}$ and $\textbf{u}$ is defined as follows
\begin{equation}
    \argmax_{\textbf{l}, \textbf{u}} \mathcal{L}(\textbf{l}, \textbf{u})- \lambda_2 \left(\sum_{j = 1}^D ReLU(l_j - x'_{qj}) + \sum_{j = 1}^D ReLU(x'_{qj} - u_j)\right)
\end{equation}
Adam optimizer \cite{adam} is used for this non-linear and non-convex optimization with default parameter values for obtaining the results. On an average across the datasets, with a learning rate of 0.01, the Adam optimizer required around 2500 iterations to converge. 

\subsection{Greedy Attribute Elimination for Human Interpretability}
The explanations created might still be too large (containing non-trivial bounds for many dimensions) for a human to understand. We reduce the size of the generated explanations using a greedy elimination procedure to improve human interpretability. An attribute whose removal results in a maximum increase in the coverage while retaining precision above the user-defined threshold is eliminated. If no such attribute exists, then the attribute whose removal reduces the precision by the minimum extent is excluded from the explanation. Attributes are removed greedily at least for $D-K$ times, where $K$ is the maximum number of attributes that can be part of an explanation as set by the user. Note that once we get the the hypercuboid $S(l,u)$, the greedy selection of a single feature will take $O(D)$ time because computing coverage and precision for a given hypercuboid with one feature removal will take constant time to compute.

\subsection{Local to Global Explanations}
To gain a broader understanding of how the classification model works on the entire dataset, we would need to generate multiple explanations for a comprehensive set of instances. This is an infeasible task due to the significant computational complexity. Instead of creating a global explanation by combining local explanations of randomly selected instances, we identify an optimal set of local explanations that can approximate the global behavior of the classification model. 

The process of creating a global explanation is started by considering a moderately sized subset of the training set (chosen randomly). Local explanations are generated using the MAIRE framework for all the instances in this set. A subset of these explanations is selected greedily, such that every new local explanation added to the global explanation leads to the maximum increase in the overall coverage of the global explanation. We call this procedure Maximum Symmetric Difference (MSD Select) as the local explanation that results in the maximum symmetric difference with the current estimate of the global explanation is added to the global explanation.

The global explanation can be viewed as a new rule-based classifier $f'(\textbf{x})$. Given a test data point, several local explanations that are part of $f'(\textbf{x})$ can be applied to predict the class label. We 
propose to use the majority class label among the applicable explanations for generating the class label.


\subsection{Extension to Discrete Attributes}
The MAIRE framework is directly applicable on ordered discrete attributes. The final explanation is a set of consecutive discrete values. The generated explanation is slightly modified for ordered discrete attributes by changing $l_i$ to the smallest discrete value that is greater than or equal to $l_i$ and changing $u_i$ to the largest discrete value that is lesser than or equal to $u_i$. This modification does not affect coverage or precision and improves readability. In the case of a categorical attribute (unordered), finding intervals is not meaningful. We instead convert all categorical attributes to their equivalent one-hot encoding. The transformed boolean representation is treated as ordered discrete attributes. If an explanation contains both the values of a boolean attribute, the corresponding attribute is dropped from the explanation.  If only the value one is selected, then the value of the unordered attribute in $\textbf{x}'_q$ is included in the explanation. Due to the enforcement of the second constraint, selection of only 0 is not possible as $\textbf{x}'_q$ has the value 1 for the corresponding boolean attribute.

\section{Experiments and Results}
Code for the experiments mentioned is available at \url{https://github.com/anonymousID2242/code-submission}. The MAIRE framework is tested on a wide variety of synthetic and real-world datasets. The instances for all synthetic datasets are sampled from the interval $[0, 1]$. For these datasets, a simple shape was chosen for positive class ($f(\textbf{x}) = 1$) region. Everywhere else, $f(\textbf{x})$ is 0. Using simple shapes allows for easy visualization of the explanations generated by the model. 
\subsection{Synthetic Datasets}
\begin{figure*}
\centering
\subfigure[$f(\textbf{x}'_q) = 1$]{
\includegraphics[width=0.45\textwidth]{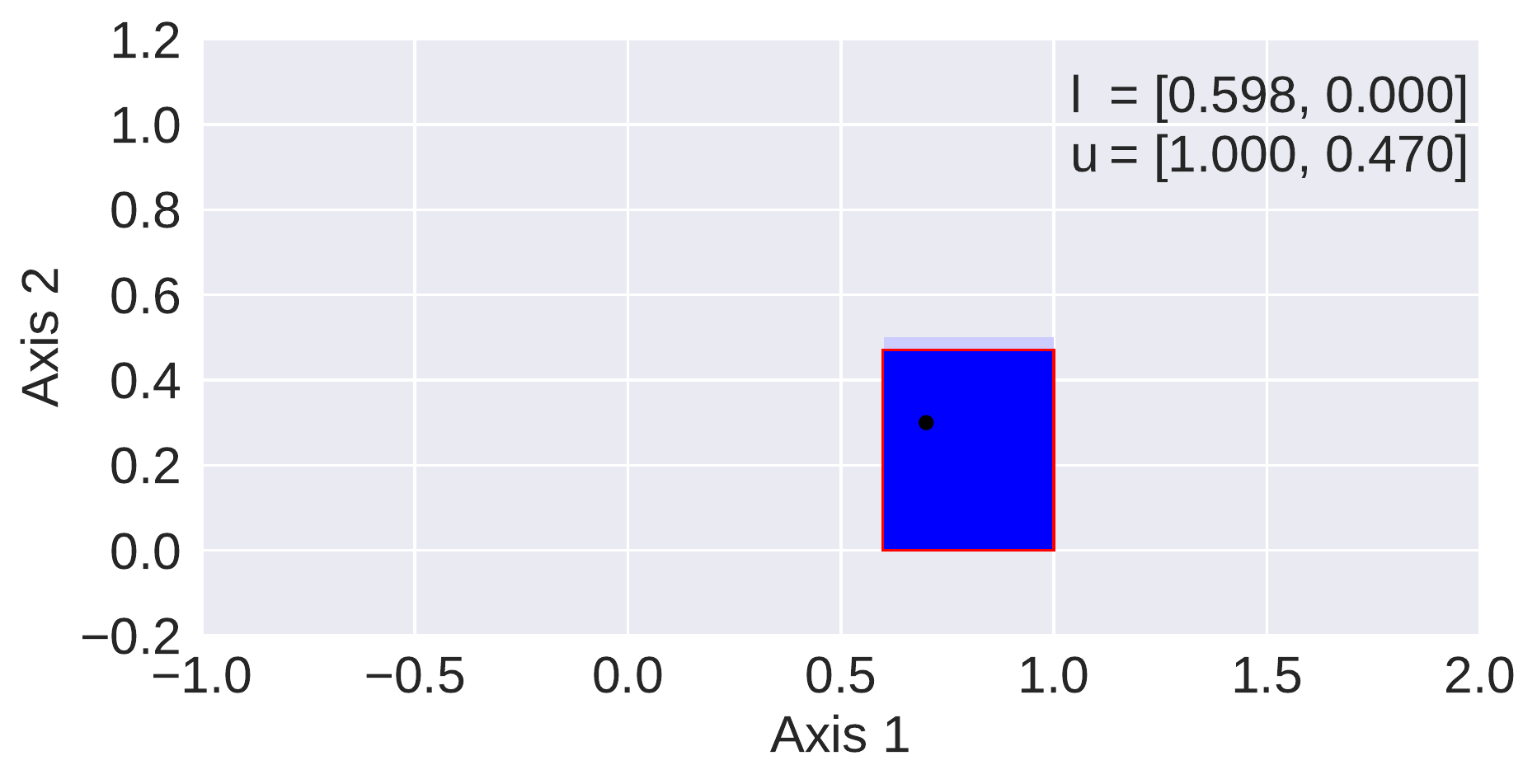}
\label{fig:syn_0a}
}%
\subfigure[$f(\textbf{x}'_q) = 0$]{
\includegraphics[width=0.45\textwidth]{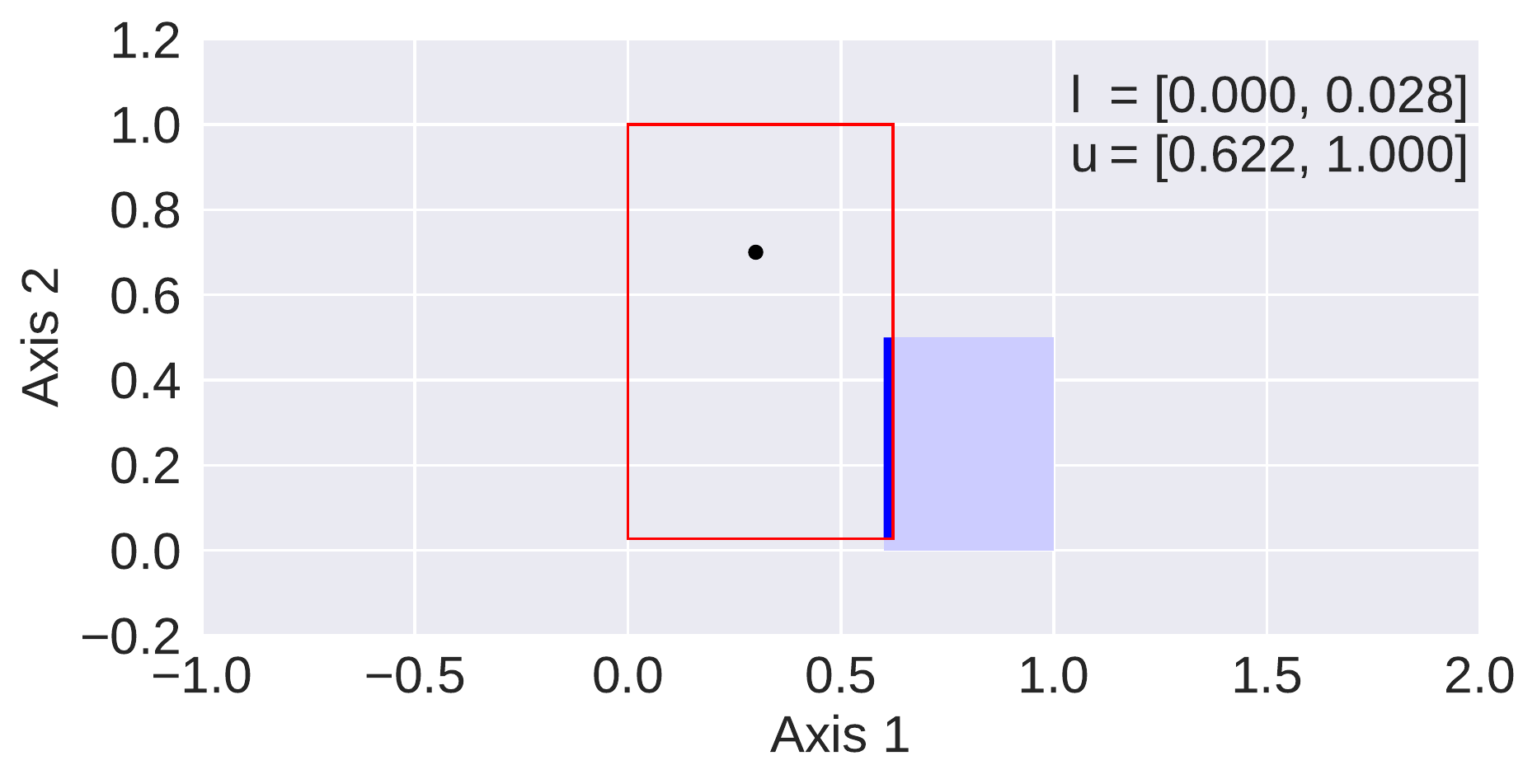}
\label{fig:syn_0b}
}
\subfigure[P = 0.80]{
\includegraphics[width=0.45\textwidth]{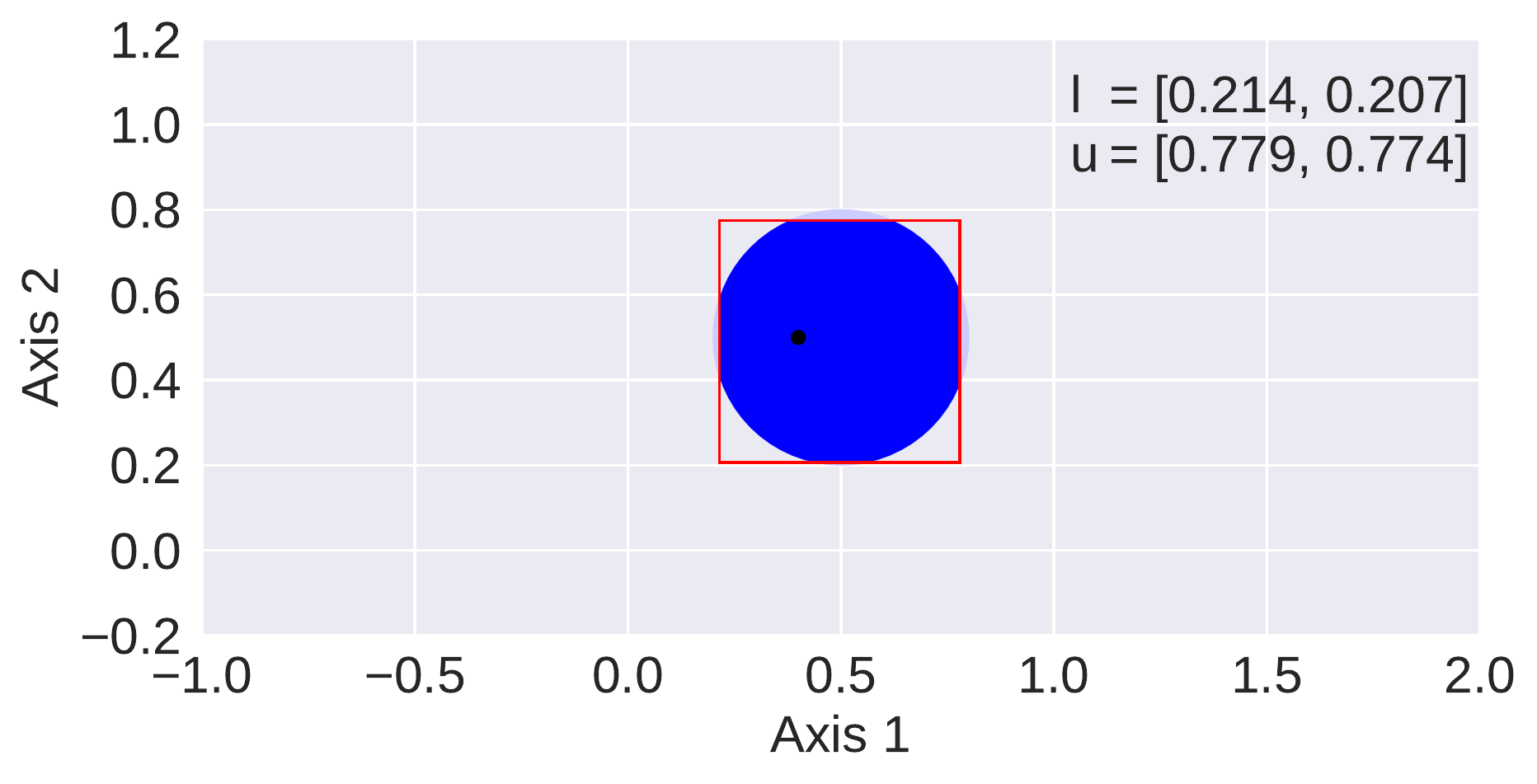}
\label{fig:syn_1a}
}
\subfigure[P = 0.95]{
\includegraphics[width=0.45\textwidth]{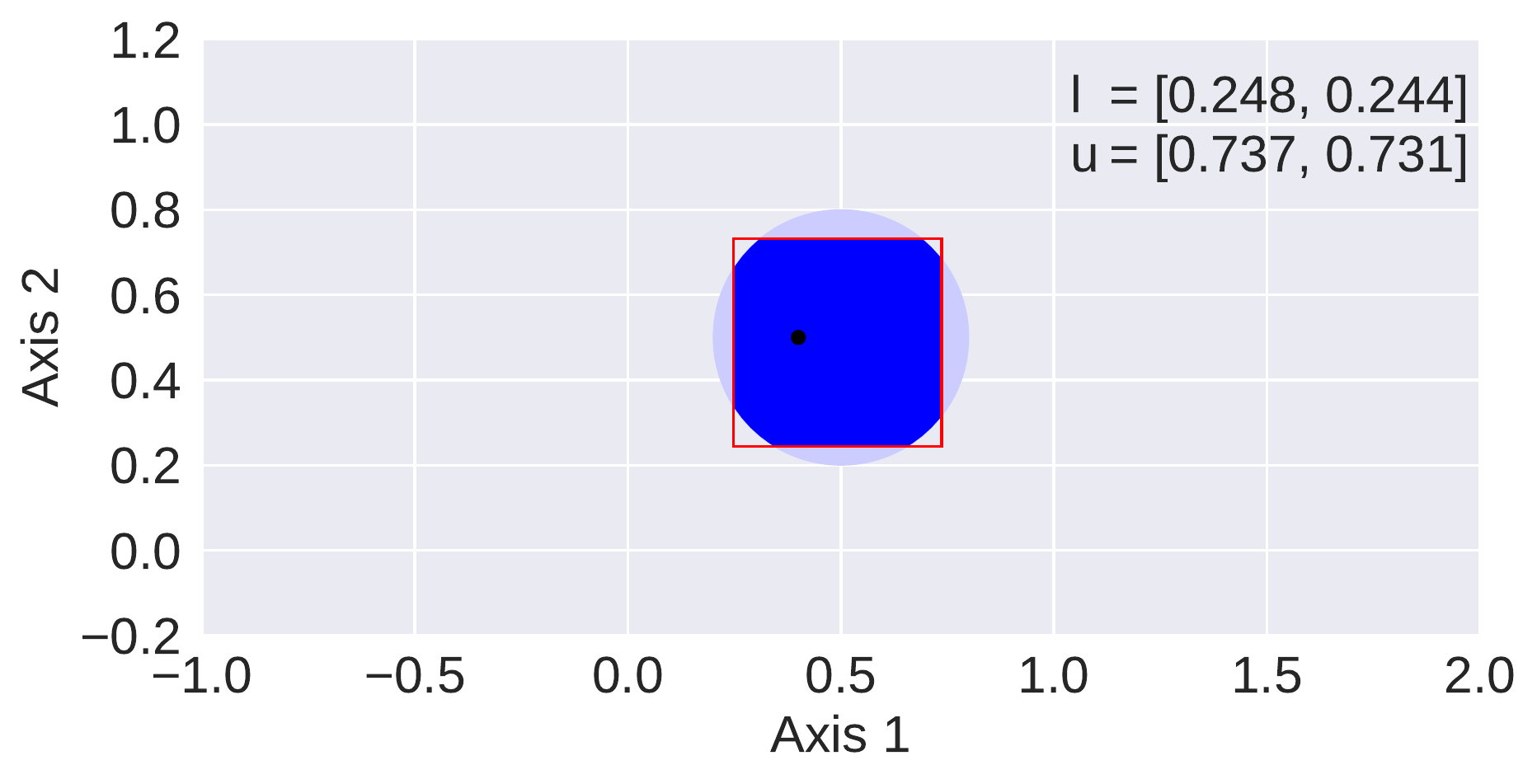}
\label{fig:syn_1b}
}
\subfigure[Second Constraint Off ($\lambda_2 = 0$)]{
\includegraphics[width=0.45\textwidth]{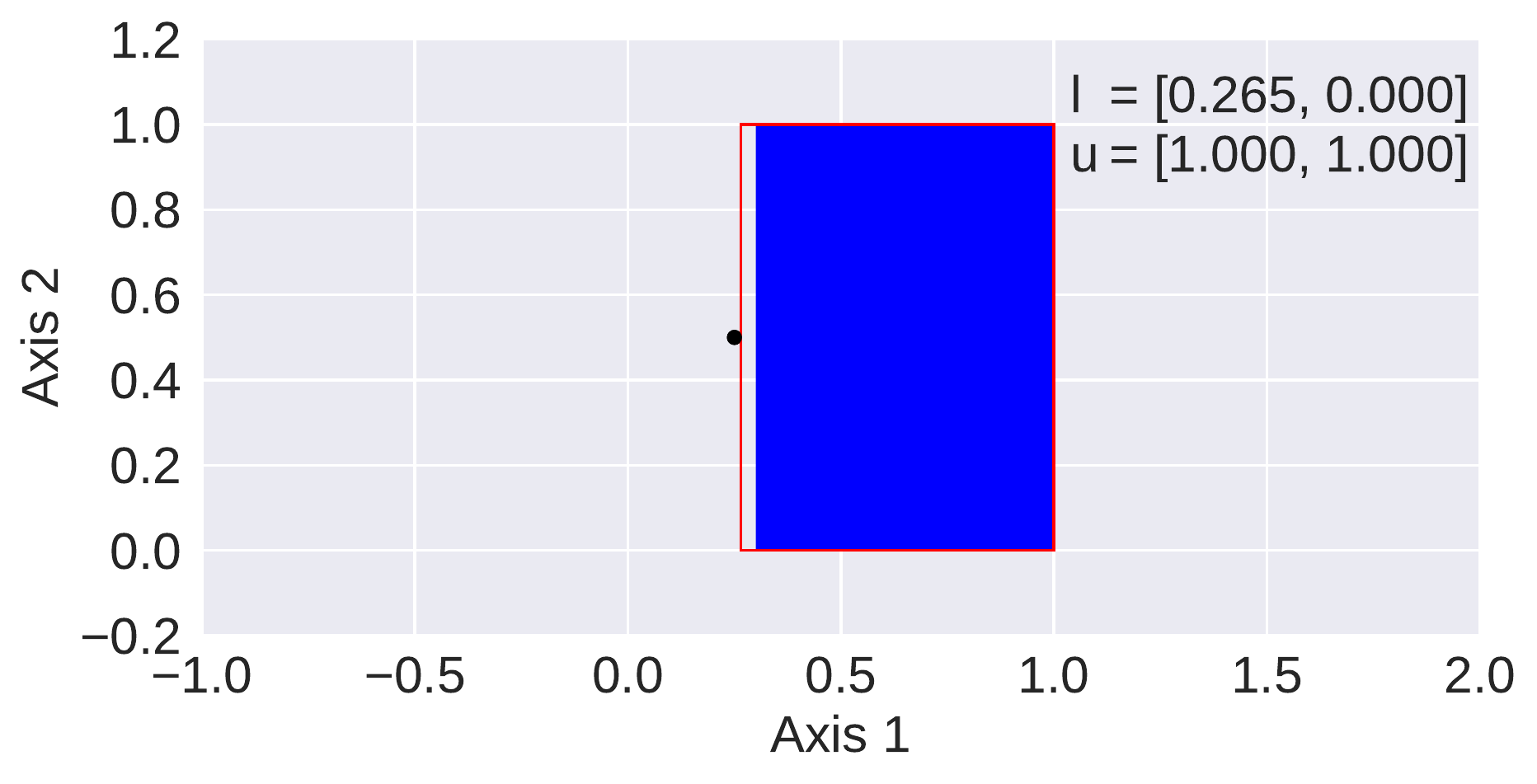}
\label{fig:syn_2a}
}
\subfigure[Second Constraint On ($\lambda_2 = 5$)]{
\includegraphics[width=0.45\textwidth]{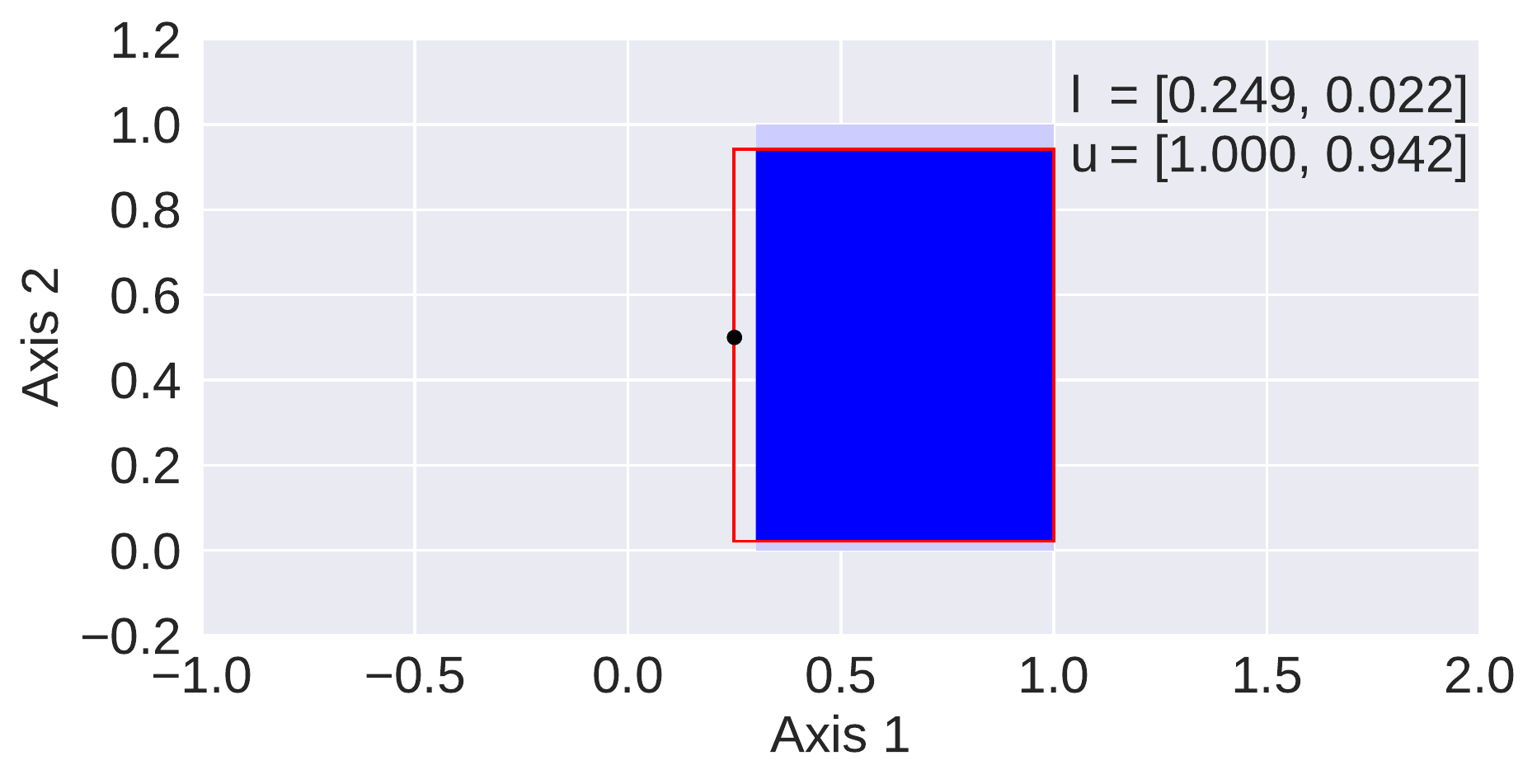}
\label{fig:syn_2b}
}
\subfigure[$f(\textbf{x}'_q) = 1$]{
\includegraphics[width=0.45\textwidth]{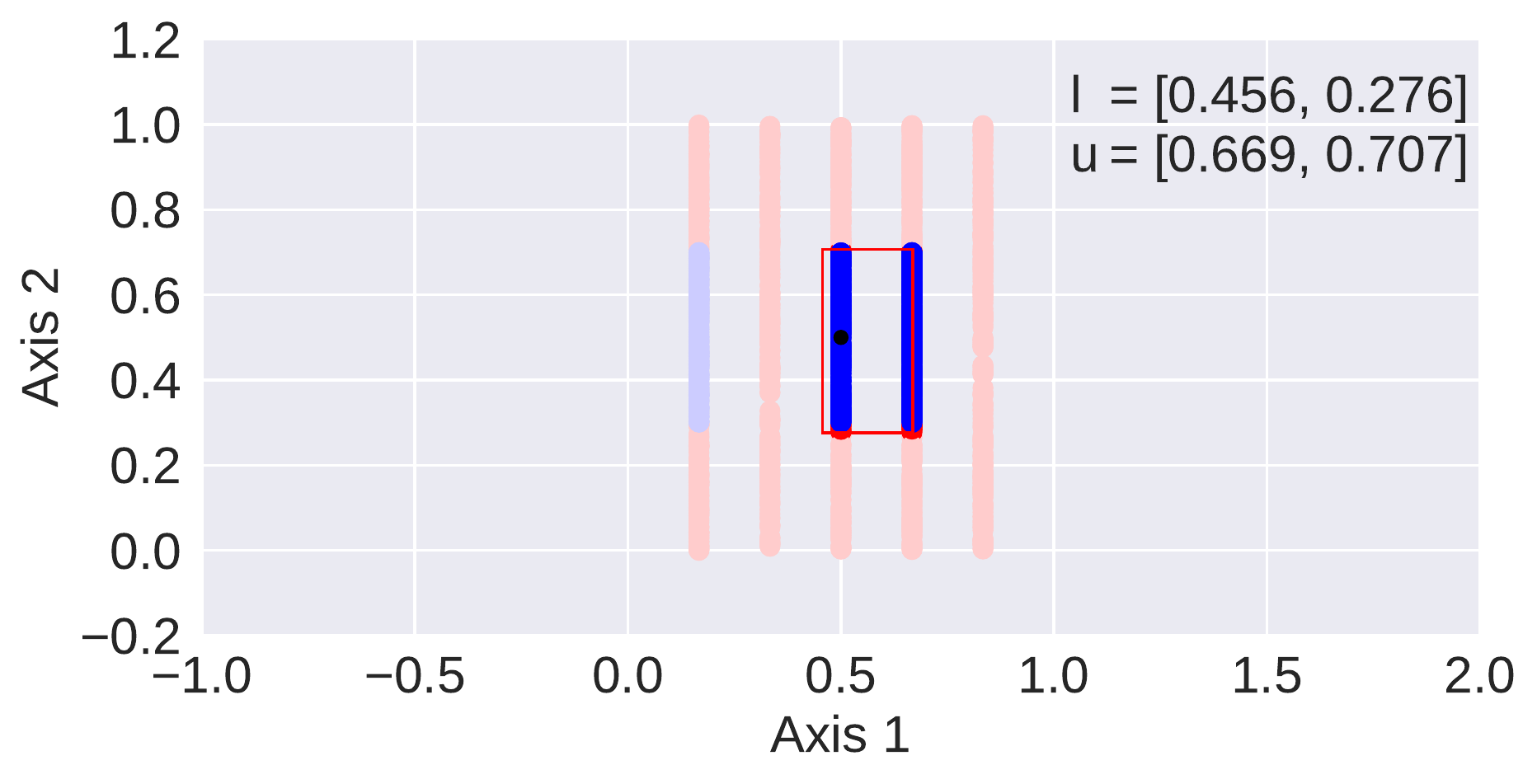}
\label{fig:syn_3a}
}%
\subfigure[$f(\textbf{x}'_q) = 0$]{
\includegraphics[width=0.45\textwidth]{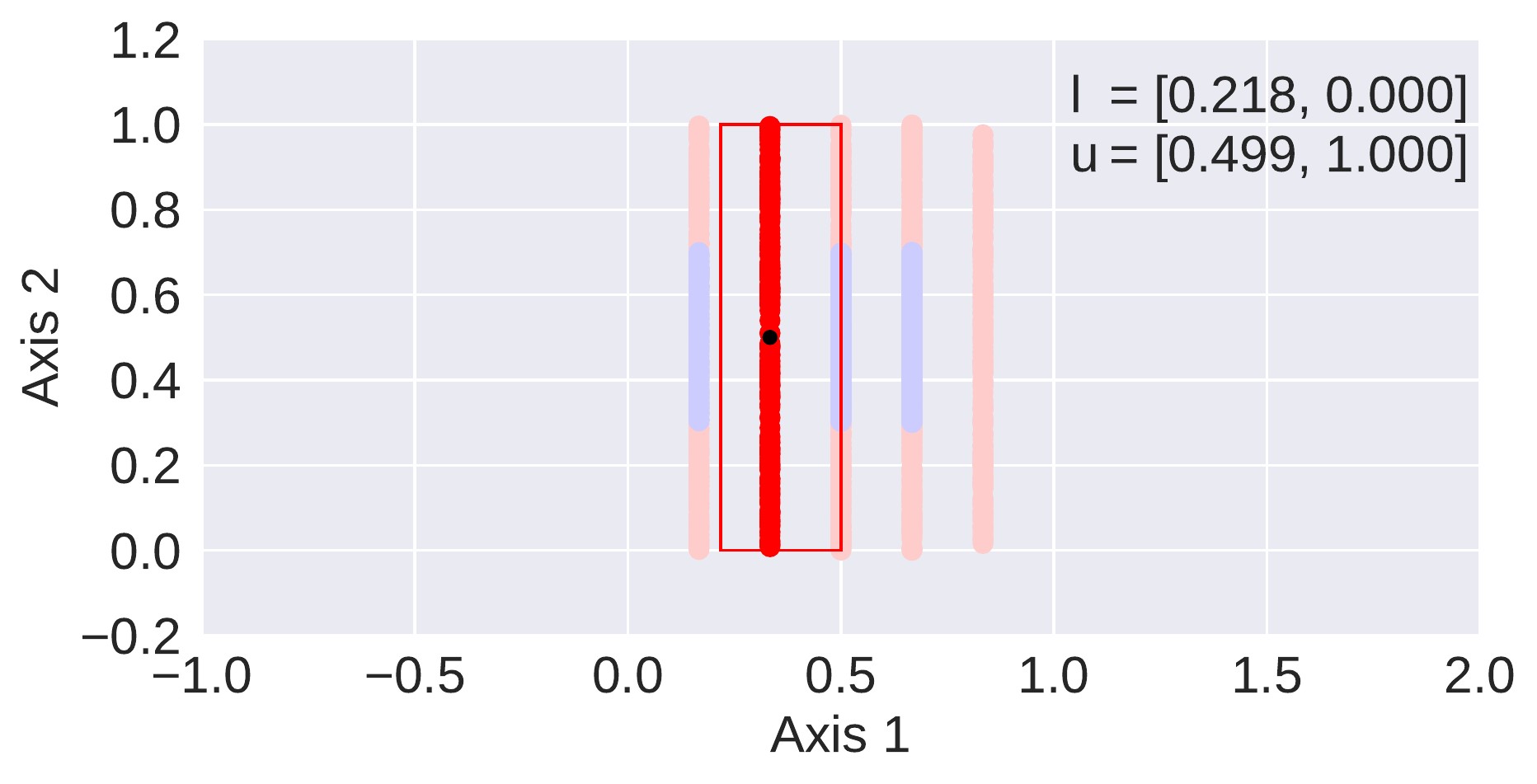}
\label{fig:syn_3b}
}
\caption{[Best viewed in color]MAIRE Explanations for Synthetic Datasets (a) and (b) for Rectangular Decision Boundaries, (c) and (d) for Circular Decision Boundaries, (e) and (f) Effect of the Second Constraint (with $\textbf{x}'_q = [0.250, 0.500]$), (g) and (h) for Synthetic Datasets with Discrete Attributes.}
\label{fig:syn_plots}
\end{figure*}

Figure \ref{fig:syn_plots} illustrates the explanations generated by the MAIRE framework on various synthetic datasets. In all of these figures, the blue regions represent $f(\textbf{x}) = 1$, the red rectangle marks the final explanation generated by the framework and the black point refers to the query point, i.e., $\textbf{x}'_q$. The lighter colors are used for marking regions that are not included in the explanations. In Figures \ref{fig:syn_plots} (a)-(f), the non-blue regions represent $f(\textbf{x}) = 0$. In Figures \ref{fig:syn_plots} (g) and (h), the red regions represent $f(\textbf{x}) = 0$ and non-blue regions are not included in the instance space, i.e. the instance space is discretized. The attribute along axis 1 is discretized to take 5 values - $\{\frac{1}{6}, \frac{2}{6}, \frac{3}{6}, \frac{4}{6}, \frac{5}{6}\}$.

Figures \ref{fig:syn_plots}(a) and (b) represents the MAIRE explanations on a rectangular decision boundary with different query points. We observe that in Figure \ref{fig:syn_plots}(a), because $\textbf{x}'_q$ belongs to $f(\textbf{x}) = 1$ region, the explanation completely covers the rectangle as this is the largest region that includes $\textbf{x}'_q$ and has a high precision. Similarly, in Figure \ref{fig:syn_plots}(b), $\textbf{x}'_q$ belongs to $f(\textbf{x}) = 0$ region. So, the explanation generated is correspondingly the largest rectangle that includes $\textbf{x}'_q$ such that most of the region has $f(\textbf{x}) = 0$, thus having a high precision. An interesting observation here is that the framework has two potential options - horizontally cover the whole range or vertically cover the whole range. We want to point out that for the MAIRE framework, these correspond to two local maxima. Vertical cover has a larger area and so, is the global maxima. We observe that the MAIRE explanation corresponds to the vertical cover. However, it could have very well chosen the other local maxima, i.e., the horizontal cover instead. This is an artifact of any gradient-based optimization routine. 

Figures \ref{fig:syn_plots}(c) and (d) represent the MAIRE explanations on a circular decision boundary with different values of the precision threshold $P$. In Figure \ref{fig:syn_plots}(c), as $P$ was 0.80, we observe that the final explanation almost completely circumscribes the circle. On the other hand, in Figure \ref{fig:syn_plots}(d), as $P$ was 0.95, the explanation generated is smaller in size as this size has lesser percentage of points with $f(\textbf{x}) = 0$.

Figures \ref{fig:syn_plots}(e) and (f) compare the explanations generated by the MAIRE framework when the second constraint (i.e., the explanation must contain $\textbf{x}'_q$) is active and inactive. In this set of experiments, there are two $f(\textbf{x}) = 1$ regions. One is marked in blue (the blue rectangle). Other than that, $f(\textbf{x}'_q)$ is also 1.

In the Figure \ref{fig:syn_plots}(e), the constraint was inactive (with $\lambda_2 = 0$). We observe that the final explanation does not contain $\textbf{x}'_q$. This is simply because the framework maximizes the precision by minimizing the thin $f(\textbf{x}) = 0$ strip. In the Figure \ref{fig:syn_plots}(f), the constraint was active (with $\lambda_2 = 5$). We observe that the final explanation contains $\textbf{x}'_q$. Here, the entire vertical range was not covered because that would have led to a precision lower than the threshold $P$.

Figures \ref{fig:syn_plots}(g) and (h) represents the MAIRE explanations for a synthetic dataset where one attribute has an ordered discrete domain and the other has a continuous domain. In the Figure \ref{fig:syn_plots}(g), $f(\textbf{x}'_q) = 1$ and so, the corresponding blue regions from the two adjacent strips were selected. While in the Figure \ref{fig:syn_plots}(h), $f(\textbf{x}'_q) = 0$ and so, the corresponding red strip was selected as the explanation.


\subsection{Tabular Datasets}
We conducted experiments to study the quality of the approximations to coverage and precision using the tabular datasets. Explanations for 100 randomly sampled data points for each of the datasets were computed. The true coverage and precision were determined for each explanation as well as the values for the corresponding approximations. The mean squared error between the true and approximate values averaged over 100 data points for the three datasets is presented in Table \ref{tab:approximation}. It can be noticed that difference in the true values and the corresponding approximations is not significant. Further this difference reduces as the number of attributes increases supporting our theoretical analysis. The German credit dataset has the highest number of attributes (20), followed by Adult (14), and Abalone (8) data sets.

The MAIRE framework is evaluated on three tabular datasets - Adult, Abalone, and German credit datasets. A three-layer neural network (containing 150, 100, and 50 nodes in each layer with ReLU activation) serves as the black-box model (though any classifier can serve the purpose). The datasets are divided into train and test splits according to the ratio of 3:1. The neural network is trained for 100 epochs. The test accuracy of the black box model for the Adult, Abalone, and German credit datasets is 81.52\%, 87.76\%, and 79.28\%, respectively. A sample of the explanations generated by MAIRE for the three datasets is presented in Table \ref{tab:my-table}.

We compare the quality of the global explanations extracted from MAIRE against other model-agnostic rule-based explanation methods capable of composing global explanations, namely; LIME and Anchors. LIME and Anchor are applicable only on discrete datasets. Hence, for a fair comparison, we have used the same discretized version of the dataset across all the models, including MAIRE. The precision threshold is set at 0.95 for all the datasets. We compare the sub-modular pick (SP) versions of LIME and Anchor against the MSD Select of MAIRE. Coverage over unseen test instances in the global explanation is used as the metric for comparison.  Figure \ref{fig:msd} a-c presents the results averaged over five trials on the three tabular datasets. MSD-MAIRE consistently performs better than SP-LIME and SP-Anchor, achieving the maximum coverage using a lesser number of explanations. Thus MSD-MAIRE has higher coverage at the same precision threshold. It is also observed that SP-LIME performs better than SP-Anchors on the German-credit dataset.

We further conduct experiments on the original non-discretized version of the tabular datasets only using MAIRE. We compare the global explanation created by MSD-MAIRE against randomly selected local explanations - RP-MAIRE. The results on the adult dataset are presented in Figure \ref{fig:msd}d. Similar trends were observed for the Abalone and German-credits datasets. 
\begin{table}[h]
\centering
\begin{tabular}{|l|l|l|l|}
\hline
              & Adult  & Abalone & German credit \\ \hline
MSD Coverage  & 0.0015 & 0.0004  & 8.552e-05     \\ \hline
MSD Precision & 0.3217 & 0.1265  & 0.0985        \\ \hline
\end{tabular}
\caption{Mean Square difference between $Cov$ and $\hat{Cov}$, $Pre$ and $\hat{Pre}$ for adult, abalone and German credit datasets averaged over 100 data points.}
\label{tab:approximation}
\end{table}

\begin{figure*}[!t]
\centering
\subfigure[]{
\includegraphics[width=0.3\textwidth]{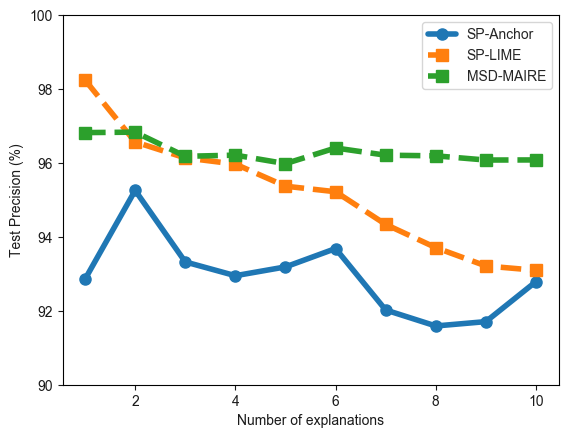}
}
\subfigure[]{
\includegraphics[width=0.3\textwidth]{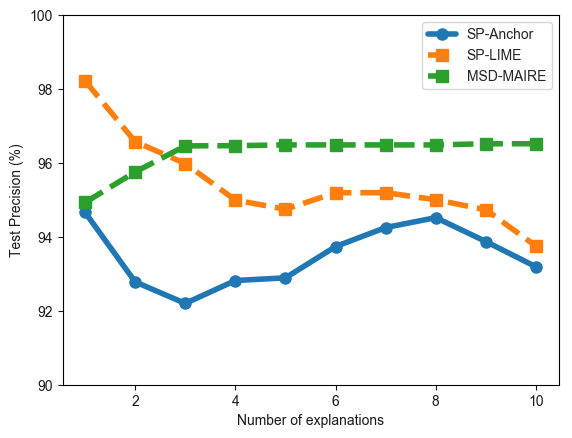}
}
\subfigure[]{
\includegraphics[width=0.3\textwidth]{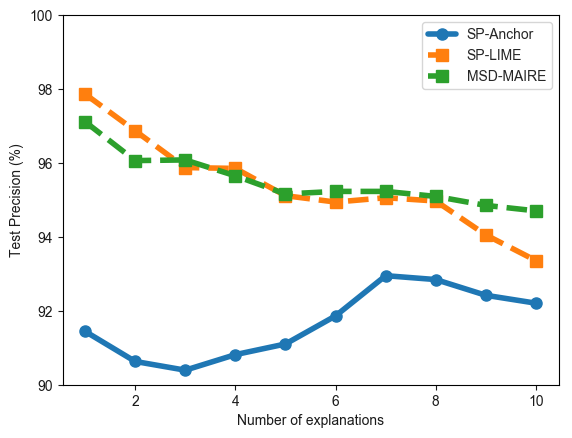}
}
\caption{[Best viewed in color] Change in test precision as a function of number of local explanations included in the global explanation Test Coverage for (a) Adult (b) Abalone (c) German-Credit datasets for SP-LIME, SP-Anchors and MSD-MAIRE.}
\label{fig:precision}
\end{figure*}

Figures \ref{fig:precision} (a-c) compare the change in precision as the local explanations are incrementally added to the global explanation for the three tabular datasets. It is observed that the proposed framework results in a minimal reduction in precision consistently across the three datasets. The observation is in line with the mechanism the MAIRE framework employs to create a global explanation ensuring a minimum reduction in precision. LIME shows the maximum decrease in precision.

Figures \ref{fig:msd}(a-c) compares the performance of the MAIRE framework for both the discretized and non-discretized versions of the tabular datasets. We observe that the coverage of the global explanation for MSD-MAIRE for both versions of the datasets is comparable for Adult and Abalone datasets. However, we notice a significant improvement in the performance of MAIRE on the discretized version of the German-Credit dataset. Further investigation is required to understand this anomaly.


\begin{figure*}[!t]
\centering
\subfigure[]{
\includegraphics[width=0.4\textwidth]{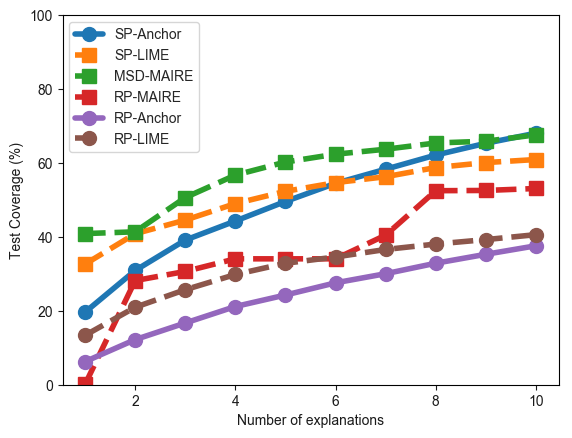}
}
\subfigure[]{
\includegraphics[width=0.4\textwidth]{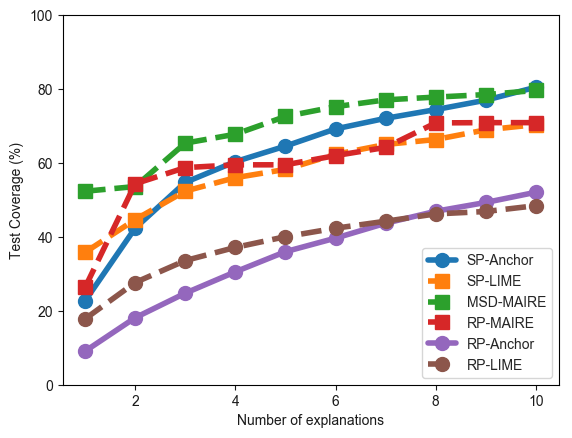}
}
\\
\subfigure[]{
\includegraphics[width=0.4\textwidth]{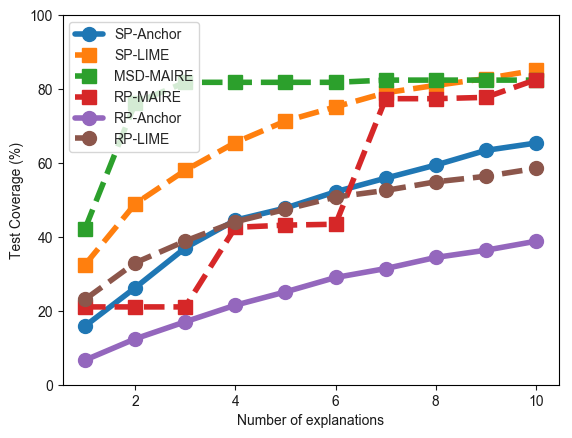}
}
\subfigure[]{
\includegraphics[width=0.4\textwidth]{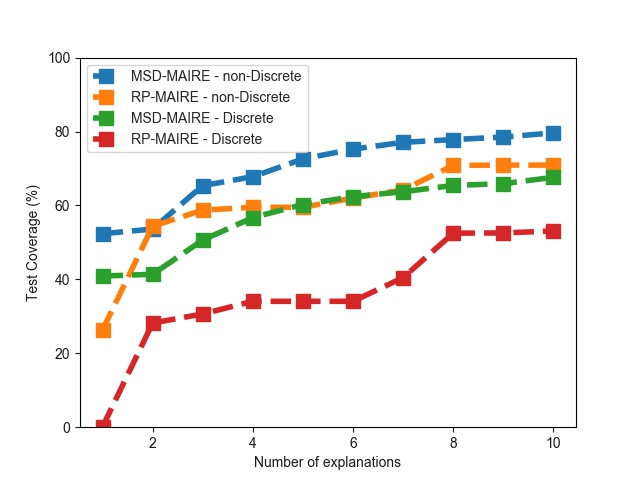}
}
\caption{[Best viewed in color]Change in test coverage as a function of number of local explanations included in the global explanation Test Coverage for (a) Adult (b) Abalone (c) German-Credit Data sets (d) Comparison of discretized vs non-discretized version of the datasets for RP-MAIRE and MSD-MAIRE}
\label{fig:msd}
\end{figure*}

\subsection{Text Datasets}
The MAIRE framework is evaluated on two text datasets- IMDB movie reviews and a reduced 20-Newsgroups dataset (containing the data belonging to the four classes - medicine, graphics, Christian, and atheism). We illustrate the model-agnostic capability of the MAIRE framework by training a decision forest classifier for the IMDB dataset and a deep learning classifier for the Newsgroup dataset. The datasets are divided into train and test splits in the ratio of 4:1. A bag of words representation was used to characterize the reviews and documents. 

In the case of IMDB movie reviews, we considered a random forest with 500 trees as our black-box model to be explained.   In the case of 20-Newsgroup dataset, we have only considered output labels `medicine,'`graphics,' `Christian,' and `atheism' as it is not feasible to present 20 labels to a human subject. We use a four-layer network consisting of two hidden layers with 512 nodes each, having ReLU activation, and dropout probability set to 0.3 among layers, and softmax activation at the output layer as the base classifier.  The model is trained for 30 epochs using Adam optimizer. The test classification accuracy on the two datasets is 87.3\% and 81.2\%, respectively.  

We use ten data points (three medicine, three atheism, two graphics, two Christian) and generated explanations for each review using 5 different approaches mentioned in the paper. For generating the MAIRE explanation, the review was converted into a bag of words vector, and the sample points for computing $Cov, \hat{Cov}, Pre, \text{ and } \hat{Pre}$ were taken by randomly flipping bits in the bag of words. The words are ranked based on the effect they have on the classification using Greedy Attribute Elimination. Tables    
\ref{tab:textexplanations} and \ref{tab:textmisclassifiedexplanations} present explanations generated by various explanatory models for both correct and incorrect classifications by the base classifier.

We conduct human subject experiments on the explanations for ten random test instances for each of the datasets to compare MAIRE against other model-agnostic approaches, namely; LIME and Anchors and feature ranking approaches, namely; L2X, and SHAP. We employ the experimental protocol of Chen et al., \cite{chen2018learning} for computing human accuracy. We assume that the explanations, in terms of the keywords (maximum of 10), convey sufficient information about the sentiment or class label of the document. We ask human subjects to infer the sentiment or the class label of the text when provided with only the explanations. The explanations from the different models and the various instances of a dataset are randomized. The final label for each document is averaged over the results of 25 human annotators. We measure the accuracy of the label predicted by the human annotator against the output of the model. The subjects are also allowed to label an explanation ``can not infer"  if the explanation is not sufficiently informative. We use the \textit{Human Accuracy} metric for comparing the different approaches and treat the instances labeled as ``can not infer" as misclassified instances.

The results are reported in Table \ref{tab:textha}. The human judgment given only ten words aligns best with the model prediction when the words are chosen from L2X and MAIRE for the IMDB and Newsgroup datasets, respectively. While on the binary classification dataset (IMDB), L2X is better than MAIRE by around 3\%, on the more challenging 4-way classification dataset (Newsgroup) MAIRE leads over L2X by 6\%. Overall the result indicates the competitiveness of MAIRE against other feature ranking approaches. It is also evident that MAIRE has significantly higher human accuracy over the other model-agnostic approaches LIME and Anchors. Table ~\ref{tab:sample_explanations} shows the results of the various models for two examples.

\begin{table}[!t]
\small
\centering
    \begin{tabular}{|c|c|c|c|c|c|}
    \hline
    Method & LIME & SHAP & Anchor & L2X & MAIRE \\
    \hline
    IMDB & 0.66 & 0.56 & 0.62 & 0.70 & 0.67 \\
    \hline
    Newsgroup & 0.66 & 0.64 & 0.70 & 0.69 & 0.75 \\
    \hline
    \end{tabular}
    \caption{Human accuracy of various model agnostic approaches on IMDB and Newsgroup datasets.}
    \label{tab:textha}
\end{table}

\begin{table*}[]
\small
    \centering
    \begin{tabular}{|p{6cm}|p{1.5cm}|p{1.5cm}|p{1.5cm}|p{1.5cm}|p{1.5cm}|}
    \hline
    Review/Document & LIME & SHAP & Anchors & L2X & MAIRE\\
    \hline
    I have to say that this miniseries was the best interpretation of the beloved novel ``Jane Eyre". Both Dalton and Clarke are very believable as Rochester and Jane. I've seen other versions, but none compare to this one. The best one for me. I could never imagine anyone else playing these characters ever again. The last time I saw this one was in 1984 when I was only 13. At that time, I was a bookworm and I had just read Charlotte Bronte's novel. I was completely enchanted by this miniseries and I remember not missing any of the episodes. I'd like to see it again because it's so good. :-) & best, completely, believable, 13, say, just, imagine, good, read, remember & beloved, none, interpretation, good, novel, missing, remember, best, read, imagine & believable remember, best, novel & imagine, interpretation, best good, novel, just, remember, read, characters, believable & enchanted, best, interpretation, remember, good, believable, novel, imagine, beloved, completely \\
    \hline
    In article 47974@sdcc12.ucsd.edu| wsun@jeeves.ucsd.edu (Fiberman) writes: Is erythromycin effective in treating pneumonia?
    It depends on the cause of the pneumonia. For treating bacterial pneumonia in young otherwise-healthy non-smokers, erythromycin is usually considered the antibiotic of choice, since it covers the two most-common pathogens: strep pneumoniae and mycoplasma pneumoniae. & cause, treating, edu, common, effective, healthy, usually, antibiotic, bacterial, non & healthy, writes, common, cause, effective, young, pneumoniae, choice, treating, cover & pneumonia, healthy, antibiotic & cause, treating, antibiotic, edu, young, covers, bacterial, pathogens, choice, considered, & common, bacterial, covers, young, pathogens, healthy, usually, smokers, cause, pneumoniae \\
    \hline
    \end{tabular}
    \caption{Sample Explanations for documents in the IMDB and Newsgroup Dataset.}
    \label{tab:sample_explanations}
\end{table*}

\subsection{Image Datasets}
We use the MAIRE framework to explain the classification results of the VGG16 model \cite{vgg}. For explaining the model output, the image is segmented into superpixels and each superpixel is treated as a Boolean attribute. $\textbf{x}'_q$ is taken to be a vector of 1 indicating the presence of all superpixels in the image. Sample points for calculating $Cov$, $\hat{Cov}$, $Pre$ and $\hat{Pre}$ are computed by flipping the bits of $\textbf{x}'_q$ randomly (i.e. randomly removing some superpixels). In the final explanation, the superpixels that covered both the values \{0, 1\} of the corresponding Boolean attributes are removed as these superpixels do not affect the decision of the classifier.
Figures \ref{fig:beagle} (a-c) show the explanation generated by the MAIRE framework and the heat map of the explanation (generated by ordering the superpixels chosen in the local explanation using Greedy Attribute Elimination) for a sample image (beagle). The VGG model has high confidence in its prediction for this image. 
\begin{figure*}[!t]
\centering
\subfigure[]{
\includegraphics[width=0.3\textwidth]{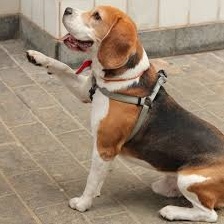}
}
\subfigure[]{
\includegraphics[width=0.3\textwidth]{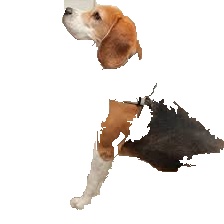}
}
\subfigure[]{
\includegraphics[width=0.3\textwidth]{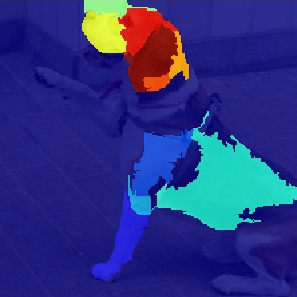}
}
\subfigure[]{
\includegraphics[width=0.3\textwidth]{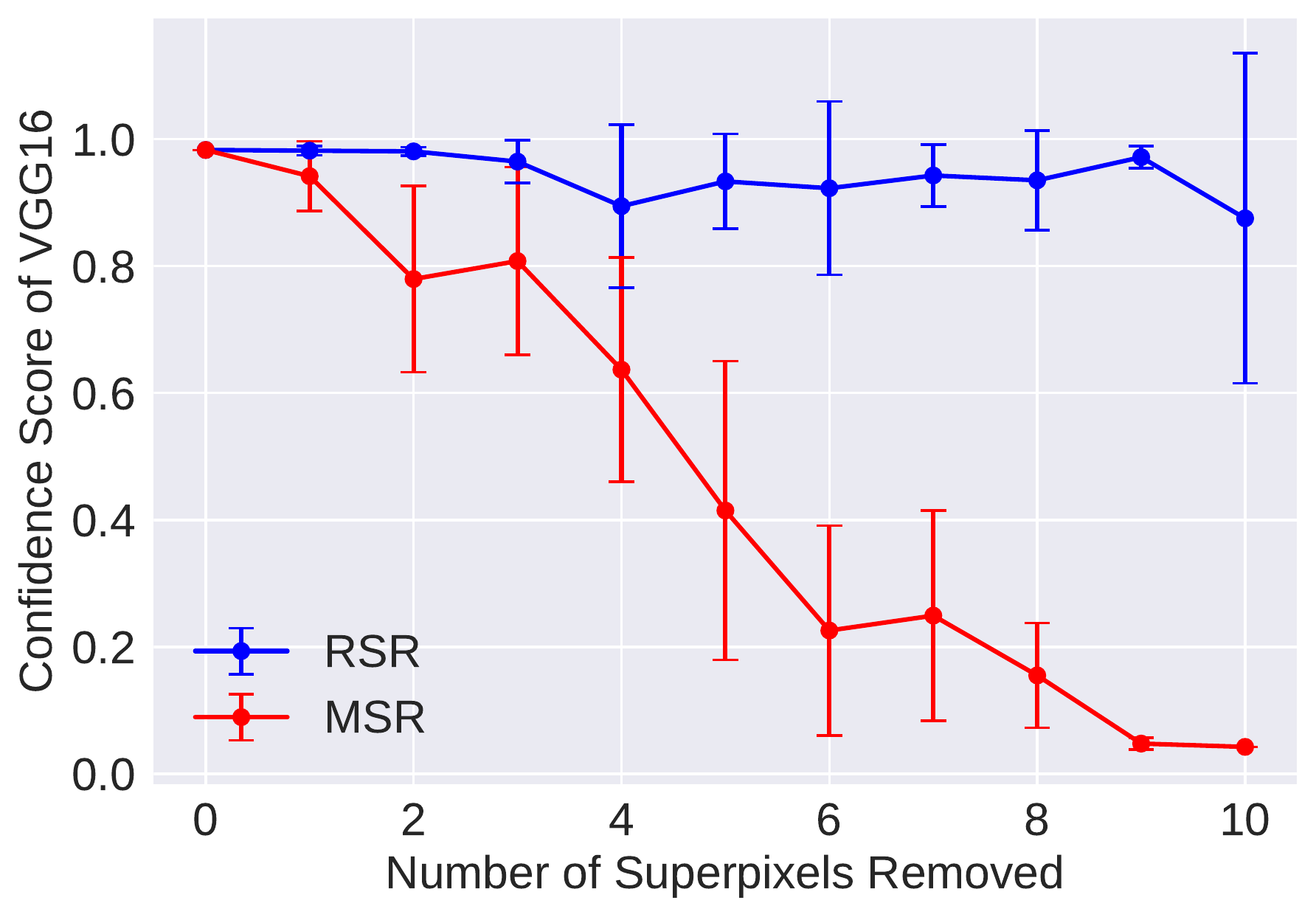}
}
\subfigure[]{
\includegraphics[width=0.3\textwidth]{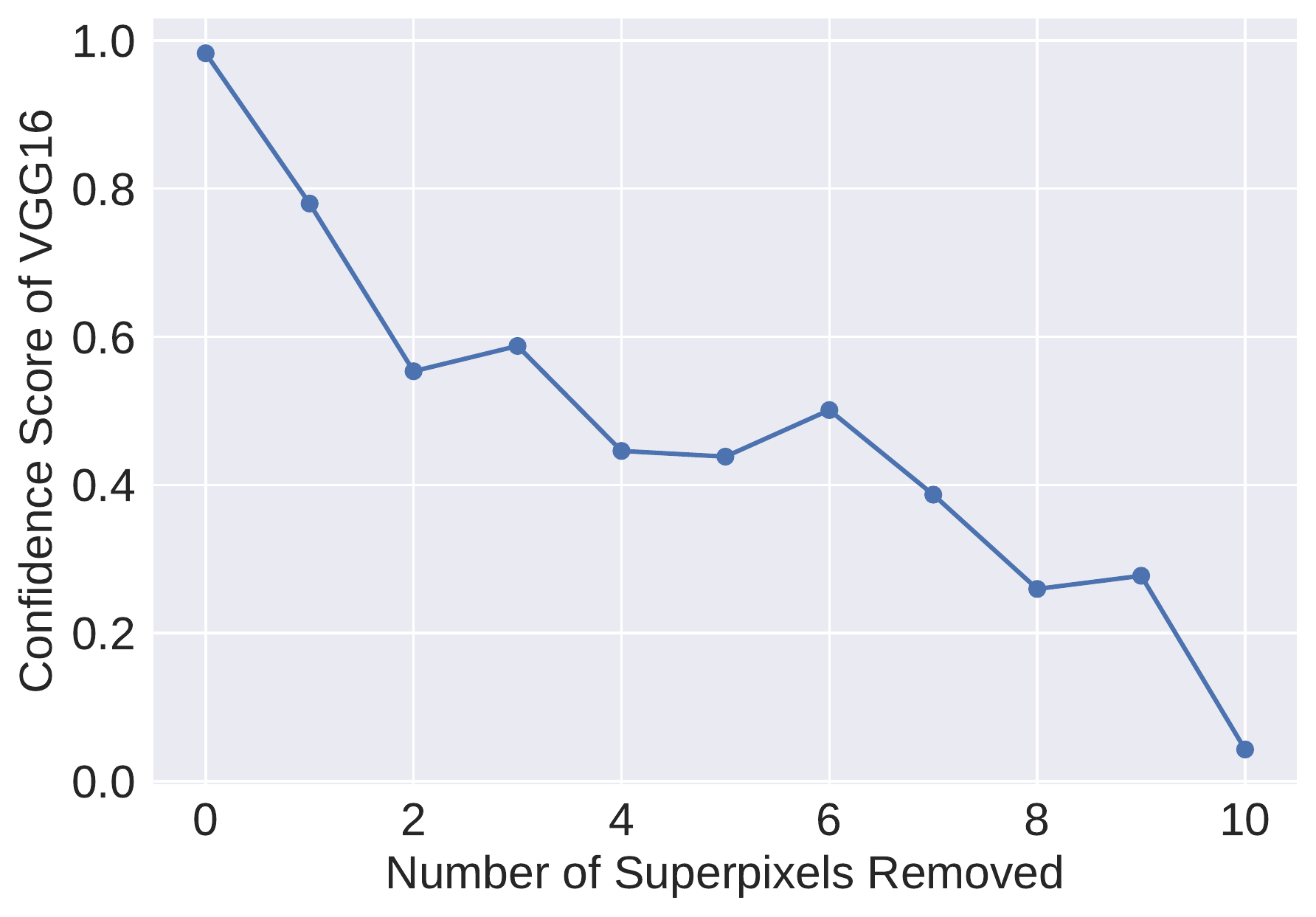}
}
\caption{[Best viewed in color]Results on the beagle image (a) Original Image (b) Explanation Generated (c) Heat Map (d) Confidence Score as more number of Superpixels are Removed (RSR = Random Superpixels Removed, MSR = MAIRE Superpixels Removed) (e) Confidence Score as more number of Superpixels are Removed (the removal order is from most important to least as given by Greedy Attribute Elimination)}
\label{fig:beagle}
\end{figure*}

\begin{figure*}[!t]
\centering
\subfigure[]{
\includegraphics[width=0.3\textwidth]{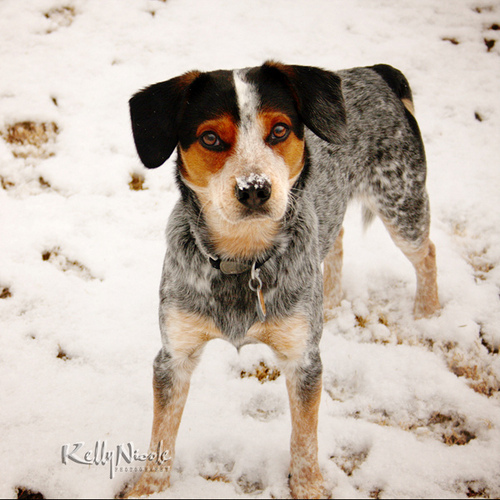}
}
\subfigure[]{
\includegraphics[width=0.3\textwidth]{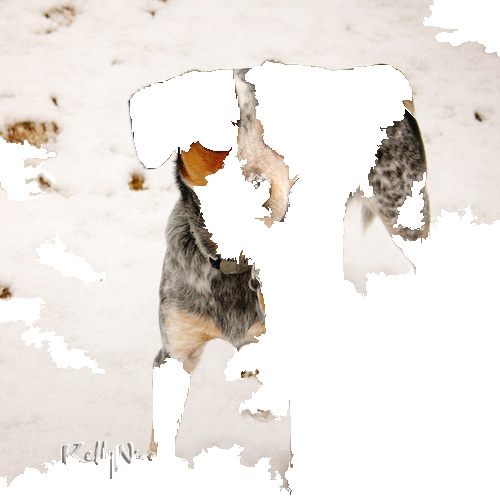}
}
\subfigure[]{
\includegraphics[width=0.3\textwidth]{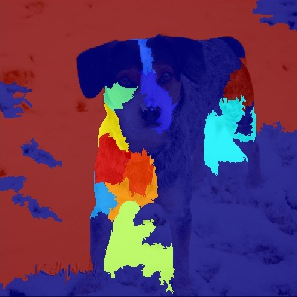}
}
\subfigure[]{
\includegraphics[width=0.3\textwidth]{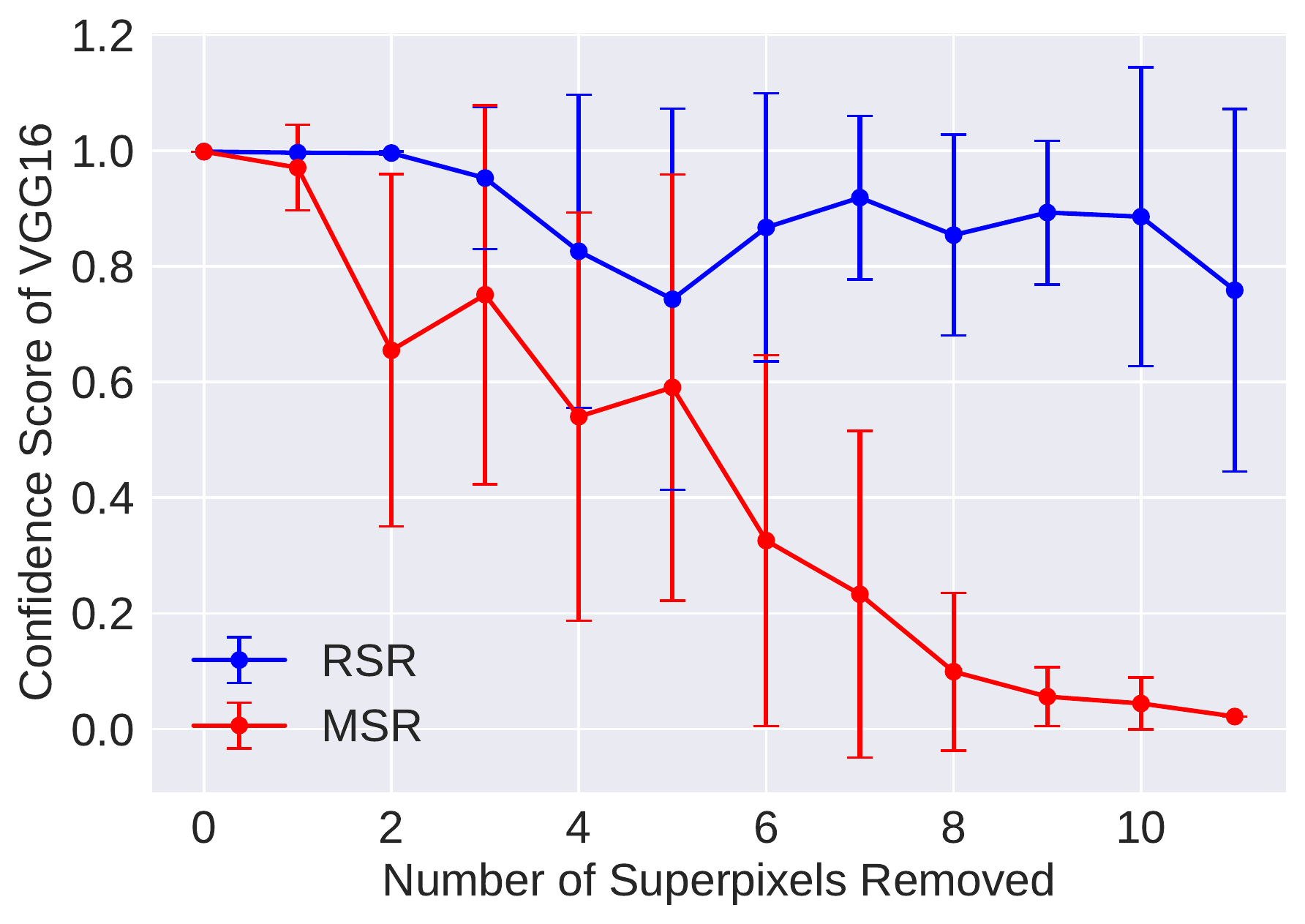}
}
\subfigure[]{
\includegraphics[width=0.3\textwidth]{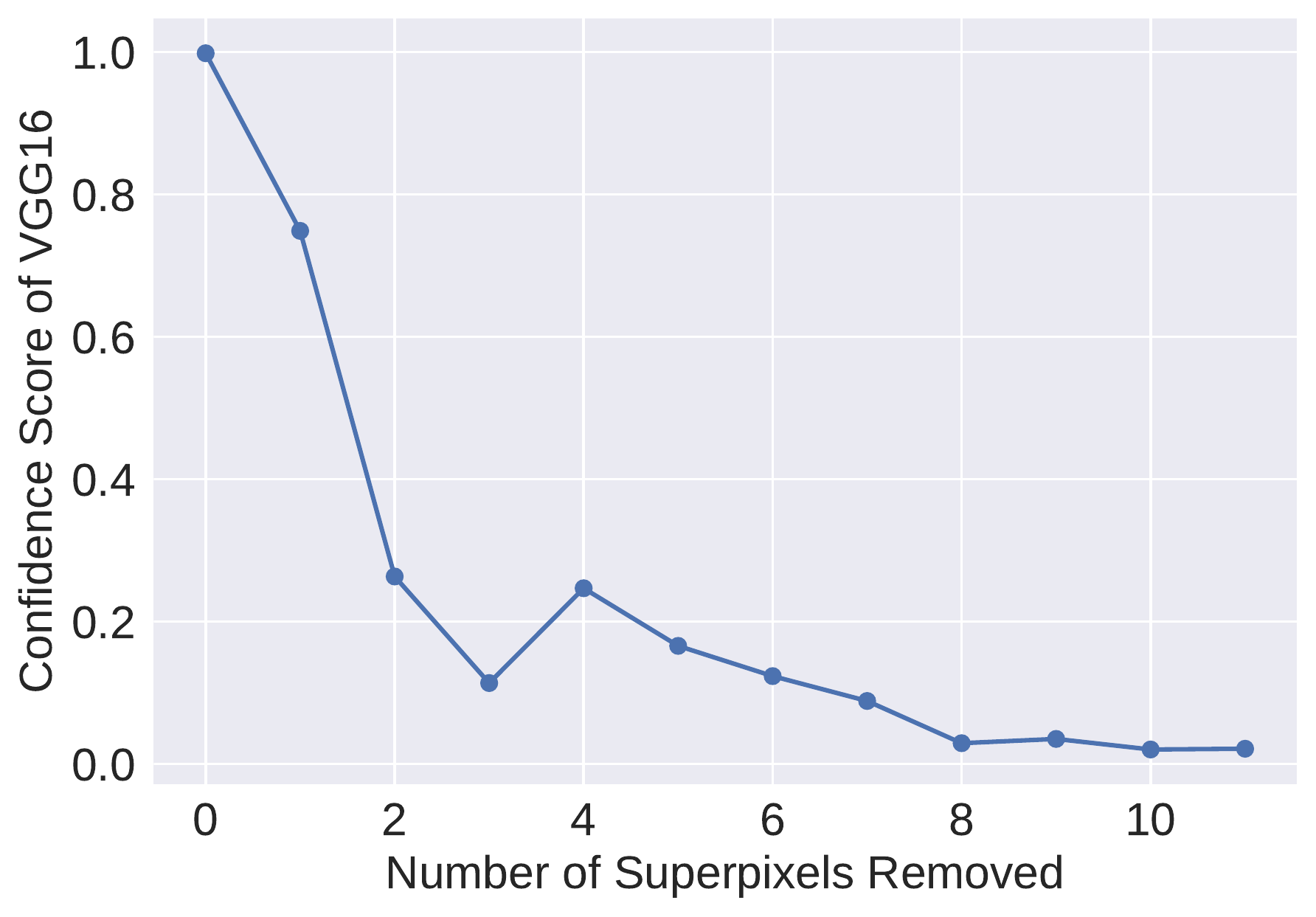}
}
\caption{[Best viewed in color] Results on the bluetick image (a) Original Image (b) Explanation Generated (c)Heat Map (d) Confidence Score as more number of Superpixels are Removed (RSR = Random Superpixels Removed, MSR = MAIRE Superpixels Removed) (e) Confidence Score as more number of Superpixels are Removed (the removal order is from most important to least as given by Greedy Attribute Elimination)}
\label{fig:bluetick}
\end{figure*}

We first validate the performance of the MAIRE framework by measuring the classifier confidence when random superpixels are removed (RSR) and when the superpixels picked by the MAIRE framework for the explanation are removed (MSR) from the original image. The results of this experiment are presented in Figure \ref{fig:beagle}(d). It is observed that the decrease in the classifier confidence on the removal of superpixels picked by the MAIRE framework is significantly larger than randomly selecting a superpixel. This illustrates that the MAIRE framework does indeed select the superpixels that have a big impact on the classifier.

In the second experiment, we only pick the superpixels selected by the greedy algorithm in the MAIRE framework. We iteratively remove the selected superpixels in the decreasing order of importance as estimated by the greedy algorithm, while also computing the classifier confidence. Our hypothesis is that if the greedy algorithm does indeed pick only important superpixels, then we would expect a sharp drop in the classifier confidence when the initial set of superpixels are removed from the image. Figure \ref{fig:beagle}(e) presents the results for the beagle image. We observe that by removing the top 4 superpixels selected by the MAIRE framework, the classifier confidence drops to less than 0.5. 

The Figure \ref{fig:bluetick} shows the explanation generated by the MAIRE framework and heat map of the explanation (generated by ordering the superpixels chosen in the local explanation using Greedy Attribute Elimination) for the bluetick image. The VGG model has high confidence in its prediction for this image as well. The decrease in the classifier confidence (Figure \ref{fig:bluetick}(d)) with the removal of superpixels picked by the MAIRE framework is more significant than randomly selecting a superpixel. This also illustrates that the MAIRE framework does indeed select the superpixels having a significant impact on the classifier. We also observe that by removing the top 2 superpixels selected by the MAIRE framework, the classifier confidence drops to less than 0.5 for the bluetick image. It is interesting to note that the images in Figures \ref{fig:bluetick}(b) and \ref{fig:bluetick}(c) show that the MAIRE framework selected superpixels mostly from the background in the bluetick image. Surprisingly, the VGG16 model classified the image, containing only the superpixels selected by the MAIRE framework for the bluetick image, correctly with the confidence of 0.953. Further, when we remove the superpixel containing the background snow, the VGG16 classifier confidence drops to 0.007. This indicates that the VGG16 network is focusing on perhaps incorrect regions of the image. The MAIRE framework is effective at detecting such wrong correlations learned by the machine learning model.

\subsection{Comparison Against Decision Trees}
\begin{figure}[h]
\centering
\subfigure[]{
 \includegraphics[width=0.3\textwidth]{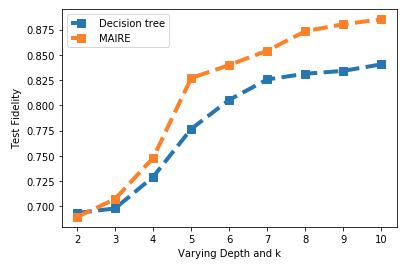}
}
\subfigure[]{
\includegraphics[width=0.3\textwidth]{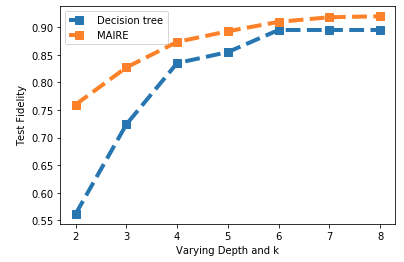}
}
\subfigure[]{
\includegraphics[width=0.3\textwidth]{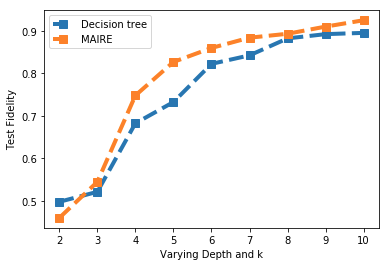}
}
\\
\subfigure[]{
\includegraphics[width=0.3\textwidth]{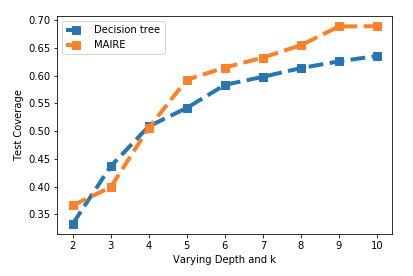}
}
\subfigure[]{
\includegraphics[width=0.3\textwidth]{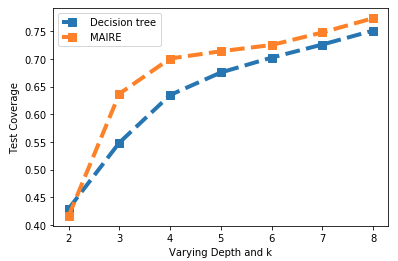}
}
\subfigure[]{
\includegraphics[width=0.3\textwidth]{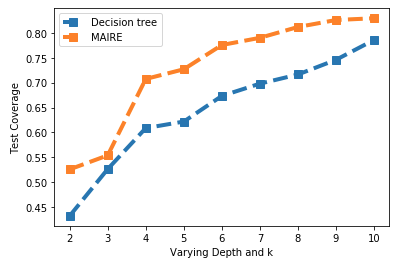}
}
\caption{(a), (b), (c) Precision of MAIRE and Decision Tree for Adult, Abalone and German Credit datasets respectively, (d), (e), (f)  Coverage of MAIRE and Decision Tree for for Adult, Abalone and German Credit datasets respectively}
\label{fig:rule_based}  
\end{figure}
  
The MAIRE framework generates rules that are similar to explanations created by a rule-learner such as a decision tree. We compare the precision (also referred to as fidelity in rule-learning literature) and interpretability of the two methods. Interpretability is measured in terms of the length of the number of antecedents in a rule generated by the explanatory model. A large number of conditions in the \textit{if-then-else} rule makes it difficult for a human to interpret the explanation.

We use all the three tabular datasets and a neural network as a black-box model for the comparison. A global explanatory model is extracted using 200 training data points from both MAIRE and a decision tree. We learn different global explanations for varying values of the parameter $K$ that refers to the number of conditions in the MAIRE explanation as well as the depth of the decision tree. The precision of the global explanation for each value of $K$ is measured on an unseen test set. A global explanation may not predict the black-box model's output for some data points due to limited coverage. The union of such data points not explained by either of the global explanation models is left out when computing precision for both the models. The result of this experiment is presented in Figure \ref{fig:rule_based}. It is observed that MAIRE has higher precision than decision trees for almost all values of $K$. Further, for small $K$, the coverage of both the models is similar. This indicates that MAIRE is able to generate better explanations in terms of both precision and interpretability than a decision tree.

\section{Summary}
In this paper, we propose a novel model-agnostic interpretable rule extraction (MAIRE) framework for explaining the decisions of black-box classifiers. The framework quantifies the goodness of the explanations using coverage and precision. We propose novel differentiable approximations to these measures that are then optimized using the gradient-based optimizer. The flexible framework can be applied to any classifier for a wide variety of datasets. We test the framework on multiple datasets (tabular, text, and image) and show that the generated explanations are competitive to state-of-the-art approaches.










 \begin{table*}[]
\small
    \begin{tabular}{|p{6cm}|p{1.5cm}|p{1.5cm}|p{1.5cm}|p{1.5cm}|p{1.5cm}|}
    \hline
    Review/Document & LIME & SHAP & Anchors & L2X & MAIRE\\
    \hline
    \textbf{model prediction : negative,  True label : negative} \newline 
Encouraged by the positive comments about this film on here I was looking forward to watching this film. Bad mistake. I've seen 950+ films and this is truly one of the worst of them - it's awful in almost every way: editing, pacing, storyline, 'acting,' soundtrack (the film's only song - a lame country tune - is played no less than four times). The film looks cheap and nasty and is boring in the extreme. Rarely have I been so happy to see the end credits of a film. The only thing that prevents me giving this a 1-score is Harvey Keitel - while this is far from his best performance he at least seems to be making a bit of an effort. One for Keitel obsessives only.  & worst, Bad, awful, lame, boring, best, cheap, acting, thing, effort   & mistake, best, lame, pacing, extreme, credits, obsessives, far, cheap, happy   & bad, storyline, nasty, boring   & credits, worst, comments, awful, cheap, nasty, mistake, extreme, lame, effort   & Worst, awful, lame, boring, mistake, less, bad, cheap, obsessives, extreme  \\ 
    \hline
    \textbf{model prediction : graphics,  True label : graphics} \newline 
    I am looking for EISA or VESA local bus graphic cards that support at least  
|1024x786x24 resolution.  I know Matrox has one, but it is very 
|expensive. All the other cards I know of, that support that 
|resoultion, are striaght ISA.  
What about the ELSA WINNER4000 (S3 928, Bt485, 4MB, EISA), or the 
Metheus Premier-4VL (S3 928, Bt485, 4MB, ISA/VL) ? 
|Also are there any X servers for a unix PC that support 24 bits? 
As it just happens, SGCS has a Xserver (X386 1.4) that does 
1024x768x24 on those cards. Please email to info@sgcs.com for more details. 
- Thomas   &VESA, PC, looking, 24, unix, email, resolution, graphic, info, support  & cards, resoultion, support, unix, bus, details, com, bits, Metheus, servers   & expensive, cards, support   & support, details, expensive, cards, bits, servers, info, Premier, resolution, bus  & Bus, Premier, graphic, Metheus, support, unix, expensive, details, ELSA, com   \\
\hline
    \end{tabular}
    \caption{Sample Explanations for correctly classified  documents in the IMDB and Newsgroup Dataset.}
    \label{tab:textexplanations}
\end{table*}

\begin{table*}[]
\small
    \begin{tabular}{|p{6cm}|p{1.5cm}|p{1.5cm}|p{1.5cm}|p{1.5cm}|p{1.5cm}|}
    \hline
    Review/Document & LIME & SHAP & Anchors & L2X & MAIRE\\
\hline
\textbf{model prediction : negative, True label : positive} \newline
this movie gets a 10 because there is a lot of gore in it.who cares about the plot or the acting.this is an Italian horror movie people so you know you can't expect much from the acting or the plot.everybody knows fulci took footage from other movies and added it to this one.since i never seen any of the movies that he took footage from it didn't matter to me.the Italian godfather of gore out done himself with this movie.this is one of the goriest Italian movies you will ever see.no gore hound should be without this movie in their horror movie collection.buy this movie no matter what it is a horehound's dream come true.  & Plot, acting, didn, true, horror, collection, dream, footage, gets, movie  & horror, cares, goriest, footage, never, expect, hound, fulci, matter, plot  & horror, plot, hounds, matter  & True, cares, collection, dream, never, acting, gore, fulci, matter, expect  & Matter, acting, horror, dream, plot, movie, footage, collection, matter, cares  \\
    \hline
    \textbf{model prediction : christian, True label : atheism} \newline
    Pardon me if this is the wrong newsgroup.  I would describe myself as an agnostic, in so far as I'm sure there is no single, universal supreme being, but if there is one and it is just, we will surely be judged on whether we lived good lives, striving to achieve that goodness that is within the power of each of us.  Now, the complication is that one of my best friends has become very fundamentalist.  That would normally be a non-issue with me, but he  feels it is his responsibility to proselytize me (which I guess it is, according to his faith).  This is a great strain to our friendship.  I would have no problem if the subject didn't come up, but when it does, the discussion quickly begins to offend both of us: he is offended because I call into question his bedrock beliefs; I am offended by what I feel is a subscription to superstition, rationalized by such circular arguments as 'the Bible is God's word because He tells us in the Bible that it is so.'  So my question is, how can I convince him that this is a subject better left undiscussed, so we can preserve what is (in all areas other than religious beliefs) a great friendship?  How do I convince him that I am 'beyond saving' so he won't try?  Thanks for any advice.  & Bible, faith, beliefs, just, religious, good, word, feel, lives, Thanks  & universal, lives, faith, power, religious, offend, superstition, convince, beliefs, complication  & Superstition, Bible, faith  & Religious, responsibility, complication, universal, subject, strain, power, circular, Subscription, convince & Strain, faith, God, superstition, saving, great, good, beliefs, religious, Bible  \\
    \hline
    \end{tabular}
    \caption{Sample Explanations for incorrectly classified  documents in the IMDB and Newsgroup Dataset.}
    \label{tab:textmisclassifiedexplanations}
\end{table*}











\newpage
\bibliographystyle{elsarticle-num}
\bibliography{aij.bib}


\end{document}


\twocolumn[
\icmltitle{Supplementary Information}








\vskip 0.3in
]




\section{Proofs}
Before we bound $\hat{Cov}$ and $\hat{Pre}$, we would like to make the following observation for the function $\Gamma(x)$ which is defined as $\Gamma(x) = c_1\sigma(c_2x) + c_3(sgn(x)c_4+c_5)$.
\begin{observation}
\label{obs1}
When $c_4=c_5=0.5$ and $c_3 = 1-c_1$, we have:
\begin{itemize}
    \item If $x>0$, $\Gamma(x) = c_1\sigma(c_2x) + c_3$
    \item If $x < 0$, $\Gamma(x) = c_1\sigma(c_2x)$
    \item If $x = 0$, $\Gamma(x) = 0.5$ 
\end{itemize}
\end{observation}

The following Lemma bounds the term $h(l,u,x)$ for any point $x$ and shows that it is a good enough approximation for the indicator function. 
\begin{lemma}
\label{main_lemma}
Let $c = \frac{c_1}{2}$ and $c_h > 1-c$. If $c < \frac{1}{4D}$, we have $\forall x_i$:
\begin{itemize}
    \item If $ l_j < x_{ij} \le u_j\ \forall j = \{1,2,\ldots, D\}$, we have:
    \begin{align*}
        h(l,u,x) &\le 1 \text{ and }\\
        h(l,u,x) &\ge 1-c
    \end{align*}
    i.e. for all points lying inside the hypercuboid, function $h(\cdot)$ is very close to 1.
    \item If $\exists k,m$ with $k+m \ge 1$, such that $x_{ij} \le l_j$ for $k$ attributes or $x_{ij} > u_j$ for $m$ attributes then:
    \begin{align*}
        h(l,u,x) &\le c \text{ and }\\
        h(l,u,x) &\ge 0
    \end{align*}
    i.e. for all points lying outside the hypercuboid, function $h(\cdot)$ is very close to 0.
\end{itemize}
\end{lemma}
\begin{proof}
The proof considers four cases depending on the number of attributes of a data point that lie between the lower bound and upper bound of the hyperrectangle.\\
\textbf{case 1:} $\forall j\in \{1,2,\ldots, D\}, l_j < x_{ij} \le u_j$. 
\begin{align*}
\allowdisplaybreaks
&h(l, u, x_i)\\ 
&= \Gamma\left(\frac{\sum_{j=1}^{D} \Gamma\left(x_{ij}-l_{j}\right)+\sum_{j=1}^{D} \Gamma\left(u_{j}-x_{ij}+c_{l}\right)}{2 D}-c_{h}\right)\\
&=\Gamma\left( \frac{\sum_{j=1}^{D} (c_1\sigma\left(c_{2}\left(x_{ij}-l_{j}\right)\right)+c_3)}{2D} \right.\\
&\left.+\frac{\sum_{j=1}^{D} (c_1\sigma\left(c_{2}\left(u_{j}-x_{ij }+c_l\right)\right)+c_3)}{2 D}-c_{h}\right)\tag*{(From Observation \ref{obs1})}
\end{align*}
Let, $t = c_1\frac{\sum_{j=1}^{D} \sigma\left(c_{2}\left(x_{ij}-l_{j}\right)\right)+ \sum_{j=1}^{D}\sigma\left(c_{2}\left(u_{j}-x_{ij }+c_l\right)\right)}{2D}+c_3-c_h$, then using the fact that $\sigma(x) \ge 0.5 \text{ if } x>0$, we have:
\begin{align*}
    t &\ge \frac{c_1}{2}+c_3 - c_h\\
    &\ge 1-\frac{c_1}{2} - c_h\\
    &> 0 \tag*{(if $c_h + \frac{c_1}{2} < 1$)}
\end{align*}
Thus, if $c_h + \frac{c_1}{2} < 1$, we have $t > 0$. Thus, we get,
$h(l,u,x_i) = \Gamma(t) = c_1\sigma(c_2t) + c_3$ from Observation \ref{obs1}. Since, $t > 0$, $c_2t > 0$ for any $c_2 > 0$, we have,
$h(l,u,x_i) \ge \frac{c_1}{2} + c_3 \ge 1-\frac{c_1}{2}$. Also, $h(l,u,x_i) = c_1\sigma(c_2t) + c_3 \le c_1 + c_3 \le 1$\\

\textbf{Case 2:} Let us assume that $\exists k$ such that $x_{ij} \le l_j$ for $k$ attributes i.e. point lie outside or on the lower bound of hypercuboid for $k \ge 1$ attributes and $\exists m$ such that $x_{ij} > u_j$ for $m \ge 1$ attributes. Out of $k$ attributes, let $k_1$ attributes have $x_{ij} = l_j$ and $k-k_1$ attributes $x_{ij} < l_j$. Then, we have:
\begin{itemize}
    \item For all $k_1$ attributes: $\Gamma(x_{ij} - l_j) = 0.5$ 
    \item For $k-k_1$ attributes:$\Gamma(x_{ij} - l_j) = c_1\sigma(c_2(x_{ij}-l_j)) \le c_1$
    \item For $D-k$ attributes: $\Gamma(x_{ij} - l_j) = c_1\sigma(c_2(x_{ij}-l_j)) + c_3 \le 1$
    \item For all $m$ attributes: $\Gamma(u_j - x_{ij} + c_l) = c_1\sigma(c_2(u_j-x_{ij} + c_l)) \le c_1$
     \item For $D-m$ attributes: $\Gamma(u_j - x_{ij} + c_l) = c_1\sigma(c_2(u_j-x_{ij} + c_l)) + c_3 \le 1$
\end{itemize}
We get,
\begin{align*}
    &h(l,u,x_i)\\ 
    &= \Gamma\left(\frac{0.5k_1+\sum_{j=1}^{k-k_1}c_1\sigma(c_2(x_{ij}-l_j))}{2D}\right.\\
    &\left.+ \frac{\sum_{j=k+1}^D((c_1\sigma(c_2(x_{ij}-l_j)) + c_3)}{2D}\right.\\
    &\left. + \frac{\sum_{j=1}^m c_1\sigma(c_2(u_j-x_{ij} + c_l)}{2D}\right.\\
    &\left. + \frac{\sum_{j=m+1}^D c_1\sigma(c_2(u_j-x_{ij} + c_l) + c_3}{2D} - c_h\right)
\end{align*}
Let, $h(l,u,x_i) = \Gamma(t)$ i.e. consider the entire term in $\Gamma$ expression to be $t$ then:
\begin{align*}
    t \le &\frac{0.5k_1 + (k-k_1)0.5c_1 + (D-k)(c_1 + c_3)}{2D}\\
    &+\frac{0.5mc_1 + (D-m)(c_1+c_3)}{2D} - c_h\\
    t \le &\frac{0.5k_1(1-c_1) + 0.5kc_1 + 0.5mc_1+2D-k-m}{2D}-c_h \tag*{($1 \le k+m \le 2D$, $k_1 \le D$, and $0 < c_1 < 1$)}\\
    \le & \frac{0.5D(1-c_1) + 0.5c_1D}{2D}+\frac{2D-1}{2D}-c_h\\
    \le&\frac{1}{4D} + \frac{2D-1}{2D} - c_h\\
    \le&\frac{4D-1}{4D} - c_h
\end{align*}
Thus, when $c_h > \frac{4D-1}{4D}$, then we get $t < 0$. In this case, we have: $h(l,u,x_i) = \Gamma(t) = c_1\sigma(c_2t) \le \frac{c_1}{2}$. From Case 1, we have $\frac{c_1}{2} < 1-c_h$. Substituting $c_h > \frac{4D-1}{4D}$, we get, $\frac{c_1}{2} < \frac{1}{4D}$. Thus, if any of the attribute of the example lies outside the boundary, we get $h(l,u,x_i) \le \frac{1}{4D}$ and if all the attributes lie inside the boundary, we get $h(l,u,x_i) \ge \frac{4D-1}{4D}$
\end{proof}
We now prove the main theorem which bounds our approximate coverage in terms of the true coverage.
\begin{theorem}
 $\left(\frac{4D-1}{4D}\right) Cov \le \hat{Cov} \le \frac{1}{4D} + \left(\frac{4D-1}{4D}\right)Cov$
\end{theorem}
\begin{proof}
Let the actual coverage from the hypercuboid $(l,u)$ be $\frac{k}{N}$ i.e. $\sum_{i=1}^N\mathbb{I}(x_i \in S(l,u)) = k$. Then:
\begin{align*}
\allowdisplaybreaks
    \hat{Cov} &= \frac{1}{N}\sum_{i=1}^N h(l,u,x_i)\\
    &=\frac{1}{N}\sum_{x_i \in S(l,u)} h(l,u,x_i)+\frac{1}{N}\sum_{x_i \notin S(l,u)} h(l,u,x_i)\\
    &\ge \frac{1}{N}k(1-c)\tag*{(From Lemma \ref{main_lemma})}\\
    &\ge Cov\left(\frac{4D-1}{4D}\right) \tag*{($c <\frac{1}{4D}$)}
\end{align*}

Also,
\begin{align*}
\allowdisplaybreaks
    \hat{Cov} &=\frac{1}{N}\sum_{x_i \in S(l,u)} h(l,u,x_i)+\frac{1}{N}\sum_{x_i \notin S(l,u)} h(l,u,x_i)\\
    &\le \frac{k}{N} + \frac{N-k}{N}c\tag*{From Lemma \ref{main_lemma}}\\
    &\le c + Cov\left(1-c\right)\\
    &\le \frac{1}{4D} + Cov\left(\frac{4D-1}{4D}\right)\tag*{($c <\frac{1}{4D}$)}
\end{align*}
\end{proof}
The above result is interesting not only because it bounds the approximate coverage in terms of true coverage but it also suggests that as the features (dimension) increases, approximate coverage becomes closer to the true coverage. We also verify this from our experiments in Table \ref{tab:approximation}

We also have additional result for the bounds on the approximate precision.
\begin{theorem}
$\hat{Pre} \le Pre\left(1 + \frac{1}{Cov}\left(\frac{4D}{4D-1}\right)\right)$. Thus, When algorithm returns a hypercuboid with  $\hat{Pre} \ge P$ then $Pre \ge \frac{1}{\left(1 + \frac{1}{Cov}\left(\frac{4D}{4D-1}\right)\right)}P$
\end{theorem}
\begin{proof}
Let, $k$ points be inside the hyper-cuboid, out of $k$ points, $q$ points satisfy $f(x_i) = f(x_q)$ and $m$ points satisfy $f(x_i) = f(x_q)$ in total.
\begin{align*}
    \hat{Pre} &= \displaystyle \frac{\sum_{\substack{x_i = x_q\\ x_i \in S(l,u)}}h(l,u,x_i) + \sum_{\substack{x_i = x_q\\ x_i \notin S(l,u)}}h(l,u,x_i)}{\sum_{x_i \in S(l,u)}h(l,u,x_i) + \sum_{x_i \notin S(l,u)}h(l,u,x_i)}\\
    &\le \frac{q+(m-q)c}{(1-c)k}\\
    &\le Pre + \frac{m}{k}\left(\frac{c}{1-c}\right)\\
    &\le Pre + \frac{qN}{k^2}\left(\frac{4D}{4D-1}\right) \tag*{$\left(\frac{M}{N} \le \frac{q}{k} \text{ and }c < \frac{1}{4D}\right)$}\\
    &\le Pre + \frac{Pre}{Cov}\left(\frac{4D}{4D-1}\right)
\end{align*}
\end{proof}

\section{Methodology}
\subsection{Extension to Discrete Attributes}
The MAIRE framework is directly applicable on ordered discrete attributes. The final explanation is a set of consecutive discrete values. The generated explanation is slightly modified for ordered discrete attributes by changing $l_i$ to the smallest discrete value that is greater than or equal to $l_i$ and changing $u_i$ to the largest discrete value that is lesser than or equal to $u_i$. This modification does not affect coverage or precision and improves readability. In the case of a categorical attribute (unordered), finding intervals is not meaningful. We instead convert all categorical attributes to their equivalent one-hot encoding. The transformed boolean representation is treated as ordered discrete attributes. If an explanation contains both the values of a boolean attribute, the corresponding attribute is dropped from the explanation.  If only the value one is selected, then the value of the unordered attribute in $\textbf{x}'_q$ is included in the explanation. Due to the enforcement of the second constraint, selection of only 0 is not possible as $\textbf{x}'_q$ has the value 1 for the corresponding boolean attribute.

\section{Additional Experiments and Results}
Code for the experiments mentioned is available at \url{https://github.com/anonymousID2242/code-submission}.
\subsection{Synthetic datasets}
\begin{figure*}
\centering
\subfigure[$f(\textbf{x}'_q) = 1$]{
\includegraphics[width=0.45\textwidth]{Rect_pos.pdf}
\label{fig:syn_0a}
}%
\subfigure[$f(\textbf{x}'_q) = 0$]{
\includegraphics[width=0.45\textwidth]{Rect_neg.pdf}
\label{fig:syn_0b}
}
\subfigure[P = 0.80]{
\includegraphics[width=0.45\textwidth]{Circle_0_pt_80.pdf}
\label{fig:syn_1a}
}
\subfigure[P = 0.95]{
\includegraphics[width=0.45\textwidth]{Circle_0_pt_95.pdf}
\label{fig:syn_1b}
}
\subfigure[Second Constraint Off ($\lambda_2 = 0$)]{
\includegraphics[width=0.45\textwidth]{Constraint_off.pdf}
\label{fig:syn_2a}
}
\subfigure[Second Constraint On ($\lambda_2 = 5$)]{
\includegraphics[width=0.45\textwidth]{Constraint_on.pdf}
\label{fig:syn_2b}
}
\subfigure[$f(\textbf{x}'_q) = 1$]{
\includegraphics[width=0.45\textwidth]{Discrete_pos.pdf}
\label{fig:syn_3a}
}%
\subfigure[$f(\textbf{x}'_q) = 0$]{
\includegraphics[width=0.45\textwidth]{Discrete_neg.pdf}
\label{fig:syn_3b}
}
\caption{[Best viewed in color]MAIRE Explanations for Synthetic Datasets (a) and (b) for Rectangular Decision Boundaries, (c) and (d) for Circular Decision Boundaries, (e) and (f) Effect of the Second Constraint (with $\textbf{x}'_q = [0.250, 0.500]$), (g) and (h) for Synthetic Datasets with Discrete Attributes.}
\label{fig:syn_plots}
\end{figure*}

The MAIRE framework is tested on several 2D synthetic datasets. The instances for all these datasets are sampled from the interval $[0, 1]$. For these datasets, a simple shape was chosen for positive class ($f(\textbf{x}) = 1$) region. Everywhere else, $f(\textbf{x})$ is 0. Using simple shapes allows for easy visualization of the explanations generated by the model. Some results included in the main paper are repeated here to ease the understanding of the results.

Figure \ref{fig:syn_plots} illustrates the explanations generated by the MAIRE framework on various synthetic datasets. In all of these figures, the blue regions represent $f(\textbf{x}) = 1$, the red rectangle marks the final explanation generated by the framework and the black point refers to the query point, i.e., $\textbf{x}'_q$. The lighter colors are used for marking regions that are not included in the explanations. In Figures \ref{fig:syn_plots} (a)-(f), the non-blue regions represent $f(\textbf{x}) = 0$. In Figures \ref{fig:syn_plots} (g) and (h), the red regions represent $f(\textbf{x}) = 0$ and non-blue regions are not included in the instance space, i.e. the instance space is discretized. The attribute along axis 1 is discretized to take 5 values - $\{\frac{1}{6}, \frac{2}{6}, \frac{3}{6}, \frac{4}{6}, \frac{5}{6}\}$.

Figures \ref{fig:syn_plots}(a) and (b) represents the MAIRE explanations on a rectangular decision boundary with different query points. We observe that in Figure \ref{fig:syn_plots}(a), because $\textbf{x}'_q$ belongs to $f(\textbf{x}) = 1$ region, the explanation completely covers the rectangle as this is the largest region that includes $\textbf{x}'_q$ and has a high precision. Similarly, in Figure \ref{fig:syn_plots}(b), $\textbf{x}'_q$ belongs to $f(\textbf{x}) = 0$ region. So, the explanation generated is correspondingly the largest rectangle that includes $\textbf{x}'_q$ such that most of the region has $f(\textbf{x}) = 0$, thus having a high precision. An interesting observation here is that the framework has two potential options - horizontally cover the whole range or vertically cover the whole range. We want to point out that for the MAIRE framework, these correspond to two local maxima. Vertical cover has a larger area and so, is the global maxima. We observe that the MAIRE explanation corresponds to the vertical cover. However, it could have very well chosen the other local maxima, i.e., the horizontal cover instead. This is an artifact of any gradient-based optimization routine. 

Figures \ref{fig:syn_plots}(c) and (d) represent the MAIRE explanations on a circular decision boundary with different values of the precision threshold $P$. In Figure \ref{fig:syn_plots}(c), as $P$ was 0.80, we observe that the final explanation almost completely circumscribes the circle. On the other hand, in Figure \ref{fig:syn_plots}(d), as $P$ was 0.95, the explanation generated is smaller in size as this size has lesser percentage of points with $f(\textbf{x}) = 0$.

Figures \ref{fig:syn_plots}(e) and (f) compare the explanations generated by the MAIRE framework when the second constraint (i.e., the explanation must contain $\textbf{x}'_q$) is active and inactive. In this set of experiments, there are two $f(\textbf{x}) = 1$ regions. One is marked in blue (the blue rectangle). Other than that, $f(\textbf{x}'_q)$ is also 1.
In the Figure \ref{fig:syn_plots}(e), the constraint was inactive (with $\lambda_2 = 0$). We observe that the final explanation does not contain $\textbf{x}'_q$. This is simply because the framework maximizes the precision by minimizing the thin $f(\textbf{x}) = 0$ strip. In the Figure \ref{fig:syn_plots}(f), the constraint was active (with $\lambda_2 = 5$). We observe that the final explanation contains $\textbf{x}'_q$. Here, the entire vertical range was not covered because that would have led to a precision lower than the threshold $P$.

Figures \ref{fig:syn_plots}(g) and (h) represents the MAIRE explanations for a synthetic dataset where one attribute has an ordered discrete domain and the other has a continuous domain. In the Figure \ref{fig:syn_plots}(g), $f(\textbf{x}'_q) = 1$ and so, the corresponding blue regions from the two adjacent strips were selected. While in the Figure \ref{fig:syn_plots}(h), $f(\textbf{x}'_q) = 0$ and so, the corresponding red strip was selected as the explanation.

\subsection{Tabular datasets}
We conducted experiments to study the quality of the approximations to coverage and precision using the tabular datasets. Explanations for 100 randomly sampled data points for each of the datasets were computed. The true coverage and precision were determined for each explanation as well as the values for the corresponding approximations. The mean squared error between the true and approximate values averaged over 100 data points for the three datasets is presented in Table \ref{tab:approximation}. It can be noticed that difference in the true values and the corresponding approximations is not significant. Further this difference reduces as the number of attributes increases supporting our theoretical analysis. The German credit dataset has the highest number of attributes (20), followed by Adult (14), and Abalone (8) data sets.

\begin{table}[]
\begin{tabular}{|l|l|l|l|}
\hline
              & Adult  & Abalone & German credit \\ \hline
MSD Coverage  & 0.0015 & 0.0004  & 8.552e-05     \\ \hline
MSD Precision & 0.3217 & 0.1265  & 0.0985        \\ \hline
\end{tabular}
\caption{Mean Square difference between $Cov$ and $\hat{Cov}$, $Pre$ and $\hat{Pre}$ for adult, abalone and German credit datasets averaged over 100 data points.}
\label{tab:approximation}
\end{table}

\begin{figure*}[!t]
\centering
\subfigure[]{
\includegraphics[width=0.32\textwidth]{adult-nn-precision.png}
}
\subfigure[]{
\includegraphics[width=0.32\textwidth]{abalone-nn-precision.png}
}
\subfigure[]{
\includegraphics[width=0.32\textwidth]{germancredit-nn-precision.png}
}
\caption{[Best viewed in color] Change in test precision as a function of number of local explanations included in the global explanation Test Coverage for (a) Adult (b) Abalone (c) German-Credit datasets for SP-LIME, SP-Anchors and MSD-MAIRE.}
\label{fig:precision}
\end{figure*}

Figures \ref{fig:precision} (a-c) compare the change in precision as the local explanations are incrementally added to the global explanation for the three tabular datasets. It is observed that the proposed framework results in a minimal reduction in precision consistently across the three datasets. The observation is in line with the mechanism the MAIRE framework employs to create a global explanation ensuring a minimum reduction in precision. LIME shows the maximum decrease in precision.

Figures \ref{fig:msd}(a-c) compares the performance of the MAIRE framework for both the discretized and non-discretized versions of the tabular datasets. We observe that the coverage of the global explanation for MSD-MAIRE for both versions of the datasets is comparable for Adult and Abalone datasets. However, we notice a significant improvement in the performance of MAIRE on the discretized version of the German-Credit dataset. Further investigation is required to understand this anomaly.

\begin{figure*}[!t]
\centering
\subfigure[]{
\includegraphics[width=0.3\textwidth]{adult.jpg}
}
\subfigure[]{
\includegraphics[width=0.3\textwidth]{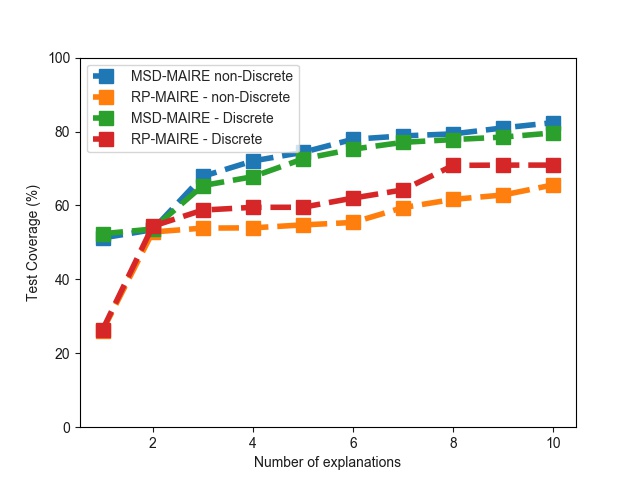}
}
\subfigure[]{
\includegraphics[width=0.3\textwidth]{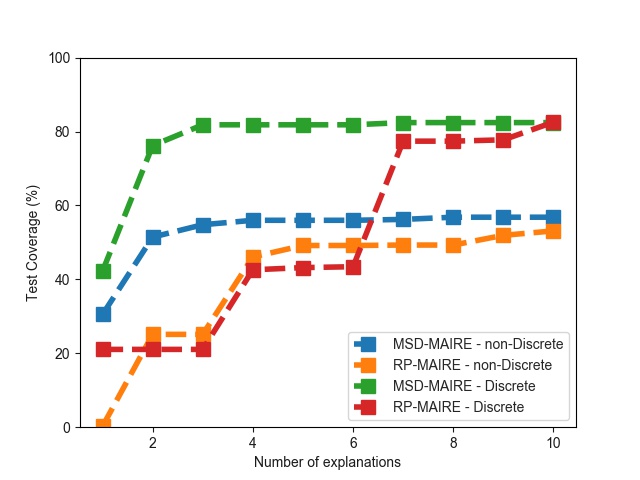}
}
\caption{[Best viewed in color]Change in test coverage as a function of number of local explanations included in the global explanation Test Coverage for (a) Adult (b) Abalone (c) German-Credit Data sets comparing of discretized vs non-discretized version of the datasets for RP-MAIRE and MSD-MAIRE}
\label{fig:msd}
\end{figure*}

\begin{figure*}[h]
\centering
\subfigure[]{
\includegraphics[width=0.15\textwidth]{Snow_original.JPEG}
}
\subfigure[]{
\includegraphics[width=0.15\textwidth]{Snow_Explanation.jpg}
}
\subfigure[]{
\includegraphics[width=0.15\textwidth]{Snow_Heat_Map.jpg}
}
\subfigure[]{
\includegraphics[width=0.22\textwidth]{Snow_Probability.pdf}
}
\subfigure[]{
\includegraphics[width=0.22\textwidth]{Snow_Greedy.pdf}
}
\caption{[Best viewed in color] Results on the bluetick image (a) Original Image (b) Explanation Generated (c)Heat Map (d) Confidence Score as more number of Superpixels are Removed (RSR = Random Superpixels Removed, MSR = MAIRE Superpixels Removed) (e) Confidence Score as more number of Superpixels are Removed (the removal order is from most important to least as given by Greedy Attribute Elimination)}
\label{fig:bluetick}
\end{figure*}

\begin{figure*}[!h]
\centering
\subfigure[]{
\includegraphics[width=0.23\textwidth]{abalone_coverage.png}
}
\subfigure[]{
\includegraphics[width=0.23\textwidth]{german_coverage.png}
}
\subfigure[]{
\includegraphics[width=0.23\textwidth]{abalone_precision.png}
}
\subfigure[]{
\includegraphics[width=0.2248\textwidth]{german_precision.png}
}
\caption{[Best viewed in color] Test Coverage for MAIRE and Decision tree (a) Abalone (b) German-Credit, Test Precision for MAIRE and Decision tree (c)  Abalone (d) German-Credit datasets}
\label{fig:mairevsdt}
\end{figure*}

\subsection{Text datasets}
The datasets are divided into train and test splits in the ratio of 4:1. A bag of words representation was used to characterize the reviews and documents. 
In the case of IMDB movie reviews, we considered a random forest with 500 trees as our black-box model to be explained. The test accuracy of the above model is 87.3\%.   In the case of 20-Newsgroup dataset, we have only considered output labels `medicine,'`graphics,' `Christian,' and `atheism' as it is not feasible to present 20 labels to a human subject. We use a four-layer network consisting of two hidden layers with 512 nodes each, having ReLU activation, and dropout probability set to 0.3 among layers, and softmax activation at the output layer as the base classifier.  The model is trained for 30 epochs using Adam optimizer. The test accuracy of the above black-box model is 81.17\%. We use ten data points (three medicine, three atheism, two graphics, two Christian) and generated explanations for each review using 5 different approaches mentioned in the main paper. For generating the MAIRE explanation, the review was converted into a bag of words vector, and the sample points for computing $Cov, \hat{Cov}, Pre, \text{ and } \hat{Pre}$ were taken by randomly flipping bits in the bag of words. The words are ranked based on the effect they have on the classification using Greedy Attribute Elimination. Table    
\ref{tab:textexplanations} presents explanations generated by various explanatory models for both correct and incorrect classifications by the base classifier.
\subsection{Image datasets}
We use the MAIRE framework to explain the classification results of the VGG16 model for images. The procedure described in the main paper is used for generating the explanations for the test bluetick image. The Figure \ref{fig:bluetick} shows the explanation generated by the MAIRE framework and heat map of the explanation (generated by ordering the superpixels chosen in the local explanation using Greedy Attribute Elimination) for the bluetick image. The VGG model has high confidence in its prediction for this image as well. The decrease in the classifier confidence (Figure \ref{fig:bluetick}(d)) with the removal of superpixels picked by the MAIRE framework is more significant than randomly selecting a superpixel. This also illustrates that the MAIRE framework does indeed select the superpixels having a significant impact on the classifier. We also observe that by removing the top 2 superpixels selected by the MAIRE framework, the classifier confidence drops to less than 0.5 for the bluetick image. It is interesting to note that the images in Figures \ref{fig:bluetick}(b) and \ref{fig:bluetick}(c) show that the MAIRE framework selected superpixels mostly from the background in the bluetick image. Surprisingly, the VGG16 model classified the image, containing only the superpixels selected by the MAIRE framework for the bluetick image, correctly with the confidence of 0.953. Further, when we remove the superpixel containing the background snow, the VGG16 classifier confidence drops to 0.007. This indicates that the VGG16 network is focusing on perhaps incorrect regions of the image. The MAIRE framework is effective at detecting such wrong correlations learned by the machine learning model.

\subsection{Comparison against decision trees}
Results of the additional experiments comparing coverage and precision of the explanations extracted from decision trees and the MAIRE framework for varying length of the explanation is presented Figure \ref{fig:mairevsdt}. These results were obtained on the Abalone and German credit datasets. Similar to the observation on the Adult data set, MAIRE has higher precision than decision trees for all values of $K$. It is also observed that for small $K$, MAIRE has higher coverage than a decision tree. This indicates that MAIRE is able to generate better explanations in terms of both precision and interpretability than a decision tree. 

   \begin{table*}[]
\small
    \centering
    \begin{tabular}{|p{7cm}|p{1.5cm}|p{1.5cm}|p{1.5cm}|p{1.5cm}|p{1.5cm}|}
    \hline
    Review/Document & LIME & SHAP & Anchors & L2X & MAIRE\\
    \hline
    \textbf{model prediction : negative,  True label : negative} \newline
Encouraged by the positive comments about this film on here I was looking forward to watching this film. Bad mistake. I've seen 950+ films and this is truly one of the worst of them - it's awful in almost every way: editing, pacing, storyline, 'acting,' soundtrack (the film's only song - a lame country tune - is played no less than four times). The film looks cheap and nasty and is boring in the extreme. Rarely have I been so happy to see the end credits of a film. The only thing that prevents me giving this a 1-score is Harvey Keitel - while this is far from his best performance he at least seems to be making a bit of an effort. One for Keitel obsessives only.  & worst, Bad, awful, lame, boring, best, cheap, acting, thing, effort   & mistake, best, lame, pacing, extreme, credits, obsessives, far, cheap, happy   & bad, storyline, nasty, boring   & credits, worst, comments, awful, cheap, nasty, mistake, extreme, lame, effort   & Worst, awful, lame, boring, mistake, less, bad, cheap, obsessives, extreme   \\
    \hline
    \textbf{model prediction : graphics,  True label : graphics} \newline
    I am looking for EISA or VESA local bus graphic cards that support at least  
|1024x786x24 resolution.  I know Matrox has one, but it is very 
|expensive. All the other cards I know of, that support that 
|resoultion, are striaght ISA.  
What about the ELSA WINNER4000 (S3 928, Bt485, 4MB, EISA), or the 
Metheus Premier-4VL (S3 928, Bt485, 4MB, ISA/VL) ? 
|Also are there any X servers for a unix PC that support 24 bits? 
As it just happens, SGCS has a Xserver (X386 1.4) that does 
1024x768x24 on those cards. Please email to info@sgcs.com for more details. 
- Thomas   &VESA, PC, looking, 24, unix, email, resolution, graphic, info, support  & cards, resoultion, support, unix, bus, details, com, bits, Metheus, servers   & expensive, cards, support   & support, details, expensive, cards, bits, servers, info, Premier, resolution, bus  & Bus, Premier, graphic, Metheus, support, unix, expensive, details, ELSA, com   \\
\hline
\hline
\textbf{model prediction : negative, True label : positive} \newline
this movie gets a 10 because there is a lot of gore in it.who cares about the plot or the acting.this is an Italian horror movie people so you know you can't expect much from the acting or the plot.everybody knows fulci took footage from other movies and added it to this one.since i never seen any of the movies that he took footage from it didn't matter to me.the Italian godfather of gore out done himself with this movie.this is one of the goriest Italian movies you will ever see.no gore hound should be without this movie in their horror movie collection.buy this movie no matter what it is a horehound's dream come true.  & Plot, acting, didn, true, horror, collection, dream, footage, gets, movie  & horror, cares, goriest, footage, never, expect, hound, fulci, matter, plot  & horror, plot, hounds, matter  & True, cares, collection, dream, never, acting, gore, fulci, matter, expect  & Matter, acting, horror, dream, plot, movie, footage, collection, matter, cares  \\
    \hline
    \textbf{model prediction : christian, True label : atheism} \newline
    Pardon me if this is the wrong newsgroup.  I would describe myself as an agnostic, in so far as I'm sure there is no single, universal supreme being, but if there is one and it is just, we will surely be judged on whether we lived good lives, striving to achieve that goodness that is within the power of each of us.  Now, the complication is that one of my best friends has become very fundamentalist.  That would normally be a non-issue with me, but he  feels it is his responsibility to proselytize me (which I guess it is, according to his faith).  This is a great strain to our friendship.  I would have no problem if the subject didn't come up, but when it does, the discussion quickly begins to offend both of us: he is offended because I call into question his bedrock beliefs; I am offended by what I feel is a subscription to superstition, rationalized by such circular arguments as 'the Bible is God's word because He tells us in the Bible that it is so.'  So my question is, how can I convince him that this is a subject better left undiscussed, so we can preserve what is (in all areas other than religious beliefs) a great friendship?  How do I convince him that I am 'beyond saving' so he won't try?  Thanks for any advice.  & Bible, faith, beliefs, just, religious, good, word, feel, lives, Thanks  & universal, lives, faith, power, religious, offend, superstition, convince, beliefs, complication  & Superstition, Bible, faith  & Religious, responsibility, complication, universal, subject, strain, power, circular, Subscription, convince & Strain, faith, God, superstition, saving, great, good, beliefs, religious, Bible  \\
    \hline
    \end{tabular}
    \caption{Sample Explanations for correctly (first and second row) and incorrectly (third and fourth row) classified  documents in the IMDB and Newsgroup Dataset.}
    \label{tab:textexplanations}
\end{table*}
